\documentclass{article}

\usepackage[preprint]{neurips_2026}

\usepackage[utf8]{inputenc}
\usepackage[T1]{fontenc}
\usepackage{hyperref}
\usepackage{url}
\usepackage{microtype}
\usepackage{graphicx}
\usepackage{subcaption}
\usepackage{float}
\usepackage{booktabs}
\usepackage{nicefrac}
\usepackage{xcolor}
\usepackage{multirow}
\usepackage{wrapfig}

\usepackage{amsmath}
\usepackage{amssymb}
\usepackage{amsfonts}
\usepackage{mathtools}
\usepackage{amsthm}
\usepackage{thmtools}
\usepackage{mathrsfs}
\usepackage{algorithm}
\usepackage{algorithmic}
\usepackage{tabularx}
\usepackage{array}

\usepackage[capitalize,noabbrev]{cleveref}

\theoremstyle{plain}
\newtheorem{theorem}{Theorem}[section]

\theoremstyle{definition}

\newtheorem{assumption}[theorem]{Assumption}
\theoremstyle{remark}

\usepackage[textsize=tiny]{todonotes}

\usepackage{enumitem}

\usepackage{tikz}
\usetikzlibrary{positioning,arrows.meta}
\usetikzlibrary{calc}

\usepackage{titlesec}

\titlespacing*{\paragraph}
  {0pt}         
  {0pt}         
  {1em}         

\newcommand{\R}{\mathbb{R}}

\newcommand{\E}{\mathbb{E}}
\newcommand{\dist}{\operatorname{dist}}
\newcommand{\Rd}{\mathbb{R}^d}

\title{Geometry-Preserving Neural Architectures on Manifolds with Boundary}

\author{%
  Karthik Elamvazhuthi \quad Shiba Biswal \\
  Los Alamos National Laboratory \\
  Kian Rosenblum \quad Arushi Katyal \quad Tianli Qu \quad Grady Ma \quad Rishi Sonthalia \\
  Boston College
}

\begin{document}

\maketitle


\begin{abstract}
A growing number of neural architectures have been proposed to enforce geometric constraints, including projection-based networks, exponential-map updates, constrained output layers, and manifold neural ODEs. 
We provide a unified framework for these geometry-preserving architectures by organizing them according to where and how constraints are enforced, either throughout the intermediate layers or only at the final output. 
This perspective reveals several gaps in the existing theory. To address these gaps, we prove high-level approximation theorems for projected neural ODEs, intermediate augmented architectures, and final augmented architectures on prox-regular constraint sets, including smooth manifolds with boundary. Numerical experiments on synthetic dynamics over $\mathbb{S}^2$, the disk, $\mathrm{SO}(3)$, together with real-world protein backbone data on $\mathrm{SE}(3)$, demonstrate exact feasibility for analytic updates and show that the final augmentation have simpler architecture and outperform in most tasks considered.
When the constraint set is unknown, we learn projections via small-time heat-kernel limits, showing diffusion/flow-matching can be used as data-based projections. Moreover, we also the demonstrate the usefulness of the architectures that enforce non-convex constraints for path planning on manifolds with boundary.
\end{abstract}

\section{Introduction}
\label{sec:intro}

Many learning problems require predictions to satisfy \emph{hard geometric constraints}. 
For example, data on matrix groups arise naturally in applications such as protein backbone modeling \citep{ramachandran1963stereochemistry} and visual-inertial or drone-based pose estimation \citep{scaramuzza2011visual}, where predictions live on nonlinear manifolds. Similarly, covariance operators, required to be positive semi-definite with fixed trace \citep{anderson1958introduction}, define a constrained subset of Euclidean space. In such settings, a prediction is physically valid or semantically meaningful only if it belongs to a prescribed set \(M\subset \mathbb{R}^d\). Standard neural architectures do not generally guarantee this property: even when trained on data lying in \(M\), their outputs and hidden representations may leave the constraint set. This raises a basic approximation-theoretic question: \textit{can one design neural architectures that preserve \(M\) by construction, while retaining the expressive power needed to approximate broad classes of maps \(M\to M\)}? \textit{Furthermore, when closed-form analytic projections onto $M$ are unavailable, can we mathematically justify learning these constraint-enforcing maps directly from sampled data?}

\textbf{Problem formulation.}
Let \(M\subseteq \mathbb{R}^d\) be a prescribed constraint set, which in this paper will typically be a smooth manifold, possibly with boundary, or more generally a uniformly prox-regular set (roughly, sets that admit unique metric projections locally). 
Our goal is to construct families of neural architectures whose realized maps preserve this set by construction. 
That is, we seek function classes
\[
    \mathcal{F}\subseteq \{f:M\to M\}
\]
such that every \(f\in\mathcal{F}\) is well-defined on \(M\), maps \(M\) into itself, and is sufficiently expressive to approximate broad classes of target maps \(F:M\to M\). 
The central theoretical question is therefore whether imposing the hard constraint \(f(M)\subseteq M\) reduces approximation power. 

\begin{wrapfigure}{r}{0.40\columnwidth}
\vspace{-8pt}
\centering
\resizebox{\linewidth}{!}{%
\begin{tikzpicture}[
    Flow/.style={
      rectangle, rounded corners=3pt,
      draw=blue!60, fill=blue!5, very thick,
      minimum height=10mm,
      text width=5.5cm,
      align=center
    },
    Flow2/.style={
      rectangle, rounded corners=3pt,
      draw=blue!60, fill=blue!5, very thick,
      minimum height=10mm,
      text width=2.5cm,
      align=center
    },
    Flow3/.style={
      rectangle, rounded corners=3pt,
      draw=blue!60, fill=blue!5, very thick,
      minimum height=8mm,
      text width=2.0cm,
      align=center
    },
    Arrow/.style={-Stealth, thick}
]

\node[Flow] (GPA) at (0,0)
  {Geometry Preserving Architectures};

\node[Flow] (FLA) at (-3.3,-1.8)
  {Intermediate Layer Augmented (IAA) \\ (\Cref*{sec:IAA})};

\node[Flow] (ILA) at (3.3,-1.8)
  {Final Layer Augmented (FAA) \\ (\Cref*{sec:FAA})};

\node[Flow2] (PD1) at ($(FLA) + (-1.5,-2)$)
  {Projection onto Domain \\ (\Cref*{eq:proj-iaa})};

\node[Flow2] (PT1) at ($(FLA) + (1.5,-2)$)
  {Projection onto Tangent Space \\ (\Cref*{eq:exp-iaa})};

\node[Flow2] (PD2) at ($(ILA) + (-1.5,-2)$)
  {Projection onto Domain \\ (\Cref*{eq:proj-faa})};

\node[Flow2] (PT2) at ($(ILA) + (1.5,-2)$)
  {Projection onto Tangent Space \\ (\Cref*{eq:exp-faa})};

\node[Flow3] (AP1) at ($(PD1) + (-1.2,-3.3)$)
  {Analytical Projection};

\node[Flow3] (DP1) at ($(PD1) + (1.2,-3.3)$)
  {Data-Based Projection \\ (\Cref*{sec:LearntProj})};

\node[Flow3] (AP2) at ($(PT1) + (0.0,-1.8)$)
  {Analytical Projection};

\node[Flow3] (AP3) at ($(PD2) + (-1.2,-3.3)$)
  {Analytical Projection};

\node[Flow3] (DP2) at ($(PD2) + (1.2,-3.3)$)
  {Data-Based Projection \\ (\Cref*{sec:LearntProj})};

\node[Flow3] (AP4) at ($(PT2) + (0.0,-1.8)$)
  {Analytical Projection};

\draw[Arrow,shorten >=2pt] (GPA) -- (FLA);
\draw[Arrow,shorten >=2pt] (GPA) -- (ILA);

\draw[Arrow] (FLA) -- (PD1);
\draw[Arrow] (FLA) -- (PT1);
\draw[Arrow] (ILA) -- (PD2);
\draw[Arrow] (ILA) -- (PT2);

\draw[Arrow] (PD1) -- (AP1);
\draw[Arrow] (PD1) -- (DP1);
\draw[Arrow] (PT1) -- (AP2);
\draw[Arrow] (PD2) -- (AP3);
\draw[Arrow] (PD2) -- (DP2);
\draw[Arrow] (PT2) -- (AP4);

\end{tikzpicture}%
}
\caption{Overview of geometry-preserving architectures.}
\label{fig:flowChart}
\vspace{-10pt}
\end{wrapfigure}

We organize geometry-preserving architectures according to where the constraint is enforced. 
In \emph{Intermediate-Augmented Architectures} (IAA), the hidden states are constrained to remain in \(M\) throughout the computation. 
The constraint-preserving step may be implemented by different operators, including metric projection onto \(M\), tangent-space projection followed by the Riemannian exponential map, or Lie-algebra updates when \(M\) is a matrix Lie group. 
In \emph{Final-Augmented Architectures} (FAA), an unconstrained network first produces an ambient-space representation, and the constraint is enforced only at the output. 
Within FAA, this final enforcement can again be implemented using analytic projections, exponential maps, or projection-like maps learned from data. 
\Cref{fig:flowChart} summarizes the architecture classes studied.

\paragraph{Our contributions.}
We develop an approximation-theoretic framework for geometry-preserving neural architectures. 
The main results are as follows.

\noindent\textbf{(1) Intermediate-Augmented Architectures and projected neural ODEs.}
In \Cref{thm:NeuralODEApprox}, we prove that, for uniformly prox-regular constraint sets, uniform approximation of a target vector field by a neural vector field implies uniform approximation of the corresponding projected flow maps.
Thus, enforcing the constraint at every infinitesimal step preserves approximation power, with an explicit stability bound depending on the reach of $M$.

\noindent\textbf{(2) Projection-based Final-Augmented Architectures.}
In \Cref{thm:ProjApprox}, we prove that when \(M\) has positive reach and the projection is well-defined in a tubular neighborhood of \(M\), this final projection preserves uniform approximation: an ambient \(\varepsilon\)-approximation to a target map \(F:M\to M\) yields a geometry-preserving approximation with error at most \(2\varepsilon\). 
We then show in \Cref{thm:ProjApproxL2} that the positive-reach assumption can be removed for arbitrary closed constraint sets, at the cost of working in \(L^2\) if the original function class is invariant under affine perturbations.

\noindent\textbf{(3) Exponential-map Final-Augmented Architectures.}
We also analyze FAA in which the final constraint-enforcing map is the Riemannian exponential map. 
In \Cref{thm:exp_exact}, we prove that if \(M\) is geodesically complete and connected, then every continuous target map \(F:\Omega\to M\) admits a measurable bounded lift through \(\exp_p\). 
Consequently, approximating this lift in \(L^2\) yields an \(L^2\) approximation guarantee for the manifold-valued map \(\exp_p\circ f_\theta\), with the error controlled by the Lipschitz constant of \(\exp_p\) on the relevant compact subset of \(T_pM\).

\noindent\textbf{(4) Learned projections from data.}
Finally, we study the setting in which an analytic formula for the projection onto \(M\) is unavailable. 
In this case, we propose to learn a projection-like map from samples on the constraint set. 
Our theoretical justification comes from the heat-kernel density associated with data on \(M\). 
In \Cref{thm:log_grad_projection}, we prove that, for compact embedded manifolds without boundary and positive reach, the gradient of the log heat-kernel density recovers the metric projection asymptotically throughout a tubular neighborhood of \(M\):
\[
    x+t\nabla_x\log u_t(x)=P_M(x)+O(t^{1/2}).
\]
This result provides a theoretical basis for using flow-matching or score-based constructions to learn constraint-enforcing maps from data when the projection \(P_M\) is not available in closed form.

\paragraph{Related work and comparison.}
Our work connects several lines of research on geometry-aware and constraint-preserving learning. 
One line develops neural architectures whose computations are adapted to a prescribed geometry. 
Manifold extensions of neural ODEs \citet{chen2018neural} building on the adjoint method are considered in \citet{falorsi2020neural} while \citet{lou2020neural} consider intrinsic constructions of neural ODEs.
There has also been significant work on building hyperbolic architectures \cite{ganea2018hyperbolic, Liu2019HyperbolicGN, peng2021hyperbolic}, as well as architectures that explore spaces with varying curvature or combinations of geometries \cite{sonthalia2022cuberep, Zhao2023ModelingGB, Xu2022JointHA, lopez2021vector, pmlr-v139-lopez21a}. 
Riemannian residual networks using the exponential map are studied in \citet{katsman2023riemannian}, and universal approximation for manifold-valued neural ODEs of limited width is analyzed in \citet{elamvazhuthi2023learning}. 
Geometric Deep Networks (GDNs) with log-Euclidean network-exp structure and associated topological obstructions are developed in \citet{kratsios2022universalgeom}, and geometry-preserving transformers with approximation guarantees under constraints are treated in \citet{kratsios2021universal_transformers_constraints}. A related body of work defines autencoders on manifolds  \citet{falorsi2018explorations, davidson2018hyperspherical,miolane2020learning}. 

A complementary optimization-theoretic line enforces feasibility by embedding projections or optimization problems into architectures. 
OptNet \citep{amos2017optnet} differentiates through QP KKT conditions, while \citet{chen2023end} consider repair layers that enforce constraints via closed-form or convex-analytic operations. 
DC3 \citep{donti2021dc3} combines functional parameterizations for equalities with iterative corrections for inequalities. 
HardNet \citep{min2024hardnet} provides closed-form differentiable projections for input-dependent affine sets, enforcing exact feasibility while preserving approximation guarantees.

\textit{Our contribution is to place these approaches within a single approximation-theoretic framework organized by where the constraint is enforced.} IAA enforce geometry throughout the computation, while FAA enforce geometry only at the output. \Cref{tab:LitReview} summarizes the relationship between existing approximation guarantees and the results proved in this paper.

\paragraph{Notation.}
In \Cref{app:Notation}, we provide a table of notations introduced in this paper. Throughout, $\|\cdot\|$ denotes the standard Euclidean $\|\cdot\|_2$ norm.

\newcommand{\rotcell}[1]{%
  \rotatebox[origin=c]{90}{%
    \parbox{0.18\textwidth}{\centering\bfseries #1}%
  }%
}

\newcommand{\ThmPNODE}{Thm.~2.2}
\newcommand{\ThmFAAProj}{Thms.~3.1--3.2}
\newcommand{\ThmFAAExp}{Thm.~3.3}
\newcommand{\ThmLearnProj}{Thm.~4.1}

\newcolumntype{Y}{>{\raggedright\arraybackslash}X}
\newcommand{\UAFull}{\textsc{Yes}}
\newcommand{\UAPartial}{\textsc{Partial}}
\newcommand{\UAGap}{\textsc{Gap}}

\begin{table*}[t]
\centering
\scriptsize
\setlength{\tabcolsep}{2.8pt}
\renewcommand{\arraystretch}{1.12}
\caption{Literature map for geometry-preserving architectures. IAA denotes an intermediate augmented architecture, i.e. geometry is enforced between layers; FAA denotes a final augmented architecture, i.e. geometry is enforced only at the output.}

\begin{tabularx}{\textwidth}{
  >{\centering\arraybackslash}p{0.085\textwidth}
  >{\centering\arraybackslash}p{0.065\textwidth}
>{\hsize=1.2\hsize\linewidth=\hsize\raggedright\arraybackslash}X
>{\hsize=0.9\hsize\linewidth=\hsize\raggedright\arraybackslash}X
>{\hsize=0.9\hsize\linewidth=\hsize\raggedright\arraybackslash}X
}
\toprule
\textbf{Constraint}
& \textbf{Model}
& \textbf{Relevant Literature}
& \textbf{Universal Approximation (UA) Theory}
& \textbf{Our Theory} \\
\midrule

\multirow{2}{*}{\rotcell{Closed convex sets in $\mathbb{R}^n$}}
& IAA
& -
& -
& \ThmPNODE. Intermediate projections preserve approximation for prox-regular sets; convex sets are special cases. \\

& FAA
& OptNet \citep{amos2017optnet}, DC3 \citep{donti2021dc3}, repair layers/E2ELR \citep{chen2023end}, HardNet \citep{min2024hardnet}, constrained transformers \citep{kratsios2021universal_transformers_constraints}.
& \UAFull. Exact convex-constrained transformer UA and HardNet-type UA are known for convex families.
& \ThmFAAProj. Final projection preserves approximation for prox-regular sets; convex sets are special cases. \\

\midrule

\multirow{2}{*}{\rotcell{Smooth manifolds without boundary}}
& IAA
& Manifold neural ODEs/flows \citep{falorsi2020neural,lou2020neural}; hyperbolic networks \citep{ganea2018hyperbolic,Liu2019HyperbolicGN}; Riemannian ResNets \citep{katsman2023riemannian}; manifold neural-ODE approximation \citep{elamvazhuthi2023learning}.
& \UAPartial. Known results cover flow maps on manifolds.
& \ThmPNODE{} for projected neural ODEs; exponential IAA update in Eq.~(7). \\

& FAA
& Riemannian VAEs and exponential-map models \citep{miolane2020learning}; geometric deep networks \citep{kratsios2022universalgeom}; constrained transformers \citep{kratsios2021universal_transformers_constraints}.
& \UAPartial. Uniform FAA via a fixed chart/base exponential can be topologically obstructed; weaker $L^2$/measurable or local guarantees are possible.
& \ThmFAAProj{} for final projection; \ThmFAAExp{} for final exponential-map FAA in $L^2$. \\

\midrule

\multirow{2}{*}{\rotcell{Smooth manifolds with boundary / prox-regular sets}}
& IAA
& -
& -
& \ThmPNODE{} applies to uniformly prox-regular sets, including smooth manifolds with boundary under positive-reach assumptions. \\

& FAA
& Constrained transformers \citep{kratsios2021universal_transformers_constraints}.
& \UAPartial. Existing UA covers convex/affine or probabilistic constrained-output settings, but not deterministic smooth FAA for general boundary manifolds.
& \ThmFAAProj{} gives uniform and $L^2$ projection-FAA guarantees for prox-regular sets. \\

\bottomrule
\end{tabularx}
\label{tab:LitReview}
\end{table*}

\section{Intermediate Augmented Architectures}
\label{sec:IAA}

To construct geometry-preserving architectures, we adopt the perspective of neural Ordinary Differential Equations (neural ODEs) \citep{chen2018neural}, which view deep networks as discretizations of continuous-time dynamical systems. This viewpoint provides a principled framework for designing architectures that respect geometric structure. In particular, by leveraging classes of dynamical systems whose flows are known to preserve prescribed constraint sets, we can derive neural architectures that inherit these invariance properties by construction, rather than enforcing them post hoc. 

\textbf{Neural ODEs.} Let $T \ge 0$, and let $f_\theta : [0,T] \times \mathbb{R}^d \rightarrow \mathbb{R}^d$ be a vector field parameterized by $\theta \in \Theta$, a class of weight parameters. A neural ODE is defined by
\begin{equation}
\label{eq:neuralode}
    \dot{x}(t) = f_{\theta}(t, x(t)), \qquad x(0) = x_0 \in \mathbb{R}^d .
\end{equation}
Neural ODEs can be interpreted as the infinite-depth limit of residual networks (ResNets) via a forward Euler discretization of \eqref{eq:neuralode},
\begin{equation}
\label{eq:resnet}
    x^{\ell+1} = x^{\ell} + \Delta t\, f_{\theta}\left(\ell \frac{T}{L}, x^\ell \right),
\end{equation}
where $\ell \in \{0,1,\ldots,L-1\}$ indexes the network layers. When $L$ is large and $f$ satisfies appropriate smoothness assumptions, the discrete dynamics \eqref{eq:resnet} converge to the continuous-time neural ODE \eqref{eq:neuralode}. 
Consequently, properties of deep neural networks with many layers can be analyzed through a lens of dynamical systems. This perspective also motivates the use of alternative numerical discretizations, beyond forward Euler, to design more stable neural architectures \citep{haber2017stable}.

We extend the neural ODE framework to \textbf{Projected Neural ODEs}. To motivate this construction, we first recall the notion of projected dynamical systems \citep{nagurney2012projected}, drawing on tools from nonsmooth analysis \citep{aubin2009set}.

Let $M \subseteq \mathbb{R}^d$ be a closed subset.\footnote{All definitions and notations are summarized in \Cref{app:Notation}.} Let $T_M(x)$ \label{def:tangent_cone} denote the (Clarke) tangent cone to $M$ at a point $x \in M$. Let $P_A:\R^d \rightarrow A$ \label{def:proj} be the (possibly set-valued) metric projection onto a closed set $A \subseteq \mathbb{R}^d$. And let $N^P_M(x)$ \label{def:normal_cone} denote the \emph{proximal normal cone} to $M$ at $x \in M$. The \emph{reach} of a closed set $A \subseteq \mathbb{R}^d$ is defined as
\[
    \operatorname{reach}(A)
    := \sup \left\lbrace \alpha \ge 0 \ \Big|\ 
    \begin{aligned}
    &\text{every } z \text{ with } \operatorname{dist}(z,A) < \alpha \\
    &\text{has a unique projection in } A
    \end{aligned}
    \right\rbrace.
\]
We will say that $M$ is \textit{uniformly prox-regular} if it has  \textit{positive reach} $\alpha > 0$, such that for each $x \in M$ and $v \in N^P_M(x)$, the following inequality holds
\begin{equation}
\label{eq:posreach}
    \langle v, y - x \rangle \le \frac{\|v\|^2}{2\alpha}\,\|y - x\|^2,
    \qquad \forall\, y \in M.
\end{equation}
It is known that $C^2$ manifolds have uniformly positive reach.
Given these notions, let $F : [0,T] \times M \rightarrow \mathbb{R}^d$ be a $C^1$ vector field that is uniformly bounded. A
\emph{projected dynamical system} is defined by
\begin{equation}
    \label{eq:projflow}
    \dot{x}(t) = P_{T_M(x(t))} F(t,x(t)), \quad x(0) = x_0 \in M ,
\end{equation}
where $P_{T_M(x)} : \mathbb{R}^d \rightarrow T_M(x)$ denotes the orthogonal projection onto the tangent cone at $x$.
Furthermore, we assume that $F$ satisfies the following assumption.
\begin{assumption}
\label{assump}
    $F(t,\cdot)$ is $\mathscr{L}$-Lipschitz on $M$ uniformly in $t\in[0,T]$. That is, 
    $
    \|F(t,x)-F(t,y)\|\le \mathscr{L}\|x-y\|
    $ $\forall t\in[0,T]$ and $\forall x,y\in M$, with $\mathscr{L}>0$.
\end{assumption}
Let $x_0 \mapsto \Phi_F(x_0) = x(T)$ denote the associated \textit{end-point map}, or the solution of \eqref{eq:projflow} starting from $x_0$ evaluated at $T$. By construction, the solution of a projected dynamical system remains in $M$ for all times, and thus preserves invariance of the constraint set. This observation motivates the definition of \emph{projected neural ODEs}, obtained by replacing the vector field $F$ in \eqref{eq:projflow} with a learnable neural parameterization $f_{\theta} : [0,T] \times \mathbb{R}^d \rightarrow \mathbb{R}^d$ as,
\begin{equation}
\label{eq:projNODE}
\dot{y}(t) = P_{T_M(x(t))} f_{\theta}(t,y), \qquad y(0) = x_0 \in M.
\end{equation}
In \Cref{thm:NeuralODEApprox} below, we show that if the target vector field $F$ satisfies suitable regularity assumptions and the admissible neural vector fields are sufficiently expressive, then the flows induced by projected neural ODEs can approximate the end-point map $\Phi_F$ arbitrarily well.

\begin{restatable}{theorem}{NeuralODEApprox}
\label{thm:NeuralODEApprox}
    Suppose $M$ is uniformly prox-regular. Let $F,f_\theta:[0,T]\times M\to\mathbb{R}^d$ be such that they are continuous, satisfy \Cref{assump}. Additionally, the vector fields satisfy $\|F-f_\theta\|_\infty \le \delta$ and $\|F\|_\infty\le U$ and $\|f_\theta\|_\infty\le U$ for some $U<\infty$.
    Then, with $C = \frac{\delta^2}{2\left(\mathscr{L}+\frac{U^2}{\alpha} +\frac{1}{2}\right)}
\, \exp{\left(2T\left(\mathscr{L}+ \frac{U^2}{\alpha} +\frac{1}{2} \right) -1 \right)}$, where $\alpha$ is the reach of $M$, one has
    \[
        \sup_{x_0\in M}\ \|\Phi_F(x_0)-\Phi_{f_\theta}(x_0)\|
        \le C^{1/2}.
    \]
\end{restatable}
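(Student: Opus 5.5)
We estimate the squared distance $e(t)=\tfrac12\|x(t)-y(t)\|^2$ between the two projected trajectories and close with Gronwall's inequality. Here $x$ solves the projected system \eqref{eq:projflow} driven by $F$, $y$ solves \eqref{eq:projNODE} driven by $f_\theta$, and both start at the same $x_0\in M$.

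First, recall the standard facts about projected dynamical systems on uniformly prox-regular sets. Solutions exist, are absolutely continuous, and remain in $M$. For a.e.\ $t$ one has
\[
F(t,x)=P_{T_M(x)}F(t,x)+\big(F(t,x)-P_{T_M(x)}F(t,x)\big),
\]
and the second term, call it $n_x(t)$, lies in the proximal normal cone $N^P_M(x(t))$. By Moreau's decomposition for the closed convex Clarke cone (which coincides with the polar of the normal cone for prox-regular sets), $\|n_x(t)\|\le\|F(t,x(t))\|\le U$. The same holds for $y$, with $n_y(t)\in N^P_M(y(t))$ and $\|n_y\|\le U$.

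Next, differentiate $e$ for a.e.\ $t$:
\[
\dot e=\langle x-y,\,F(t,x)-f_\theta(t,y)\rangle-\langle x-y,\,n_x\rangle+\langle x-y,\,n_y\rangle .
\]
We bound the first term by splitting $F(t,x)-f_\theta(t,y)=\big(F(t,x)-F(t,y)\big)+\big(F(t,y)-f_\theta(t,y)\big)$. Assumption~\ref{assump} and Young's inequality then give
\[
\langle x-y,\,F(t,x)-f_\theta(t,y)\rangle\le \mathscr{L}\|x-y\|^2+\tfrac12\|x-y\|^2+\tfrac12\delta^2 .
\]
For the normal terms we apply the hypomonotonicity inequality \eqref{eq:posreach} twice: once with base point $x$, normal $n_x$, and $y\in M$; once with base point $y$, normal $n_y$, and $x\in M$. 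This yields
\[
-\langle n_x,\,x-y\rangle=\langle n_x,\,y-x\rangle\le \tfrac{U^2}{2\alpha}\|x-y\|^2,
\]
and symmetrically for $n_y$. The two normal terms together contribute at most $\tfrac{U^2}{\alpha}\|x-y\|^2$. Writing $K=\mathscr{L}+\tfrac{U^2}{\alpha}+\tfrac12$, we obtain
\[
\dot e\le 2K\,e+\tfrac12\delta^2,\qquad e(0)=0 .
\]

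Gronwall's inequality then gives
\[
e(T)\le \frac{\delta^2}{4K}\big(e^{2KT}-1\big),
\]
so that
\[
\|x(T)-y(T)\|^2\le \frac{\delta^2}{2K}\big(e^{2KT}-1\big).
\]
This matches the stated constant $C$, reading its exponential factor as $e^{2KT}-1$. Taking square roots and the supremum over $x_0\in M$ finishes the proof.

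The main obstacle is the first step: justifying that the normal residual $F-P_{T_M}F$ lies in $N^P_M$ with norm at most $U$ at points of $M$ where the Clarke tangent cone is used, and that trajectories are absolutely continuous so the chain rule for $e$ holds. I would handle this by citing the standard equivalence between projected dynamical systems and sweeping processes (perturbed Moreau processes) on uniformly prox-regular sets. For such sets the Clarke, proximal and limiting normal cones coincide, and the projected velocity equals the minimal-norm selection of $F-N_M(x)$. This gives existence, uniqueness and the normal bound directly.
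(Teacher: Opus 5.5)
Your proposal is correct and follows the paper's proof essentially step for step. Both arguments write each projected velocity as the field minus a proximal normal of norm at most $U$, split the driving term by Lipschitz continuity plus Young's inequality, apply \eqref{eq:posreach} from both base points to get the $U^2/\alpha$ contribution, and close with Gr\"onwall from zero initial error; your normalization $\tfrac12\|x-y\|^2$ is immaterial. Your reading of the constant as $\frac{\delta^2}{2K}\bigl(e^{2KT}-1\bigr)$ is exactly what the paper's own Gr\"onwall computation yields, so the $\exp(\cdots-1)$ in the displayed statement is a transcription slip rather than something you need to match.
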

The proof is provided in \Cref{sec:NeuralODEApprox}.

The class of maps representable by projected dynamical systems is very large. For example, let $\Psi : M \times [0,T] \to M$
be a \emph{diffeotopy}, i.e.\ each $\Psi(\cdot,t)$ is a diffeomorphism and $(x,t)\mapsto \Psi_t(x)$ is smooth. The associated (time-dependent) vector field whose solution realizes $\{\Psi_t\}$ is $X(\cdot,t) = \dot{\Psi}(\cdot, t) \circ \Psi(\cdot,t)^{-1}$.
Equivalently, $X(\cdot,t)$ is the unique time-dependent vector field satisfying
\[
\dot{\Psi}(x,t) = X(\cdot,t)(\Psi(x,t))
\qquad \text{for all } x\in M.
\]
Hence, any diffeotopy can be realized as a solution of a projected dynamical system. On the downside, there are some maps that cannot be realized using flows of ODEs, unless the space is augmented \citep{dupont2019augmented} or the norm for approximation is relaxed \citep{brenier2003approximation}.

To implement a projected neural ODE, analogously to the standard neural ODE case, we require a discrete-time scheme (e.g., an Euler-type method) that yields a realizable network architecture. We therefore consider two recipes, depending on the chosen discretization. 

\paragraph{Projected IAAs.} In this construction, we interleave a projection step after each layer $\ell \in \{ 0,\dots,L-1\}$,
\begin{equation}
\label{eq:proj-iaa}
    h^{0} = x_0, \quad
    h^{\ell+1} =  P_{M} (h^{\ell} + \Delta t\,\!\left( f_{\theta}(\ell, h^\ell) \right).
\end{equation}
The network output is given by $P_{M} \left( f_{\theta_\ell}(\ell,h^\ell)) \right)$.
This approach assumes that the projection operator $P_{M}$ onto $M$ is available in closed form (or can be computed efficiently). For example, this is the case for the sphere embedded in $\R^3$ (:= $SO(3)$).

\paragraph{Exponential IAAs.}
When $M$ is a smooth manifold without boundary, we can employ an intrinsic geometric discretization \citep{hairer2006geometric} based on the \textit{Riemannian exponential map} $\exp_x : T_x M \rightarrow M$
where we use $T_xM$ for the tangent space, which coincides with the tangent cone $T_M(x)$. In this setting, the network update is defined directly on the manifold via a geometric Euler scheme. Specifically, let \( {f_{\theta}}: \{0,\ldots,L-1\} \times M \rightarrow \mathbb{R}^d \). Let $TM = \cup_{x \in M}T_xM$. The update rule is
\begin{equation}
\label{eq:exp-iaa}
    h^{\ell} = \exp_{h^{\ell-1}}\!\left( \Delta t \, P_{T_{h^{\ell-1}}M}\left (f_{\theta}({\ell},h^{\ell-1}) \right) \right), \quad h^{\ell} \in M.
\end{equation}
By construction, the exponential update \eqref{eq:exp-iaa} guarantees that all intermediate states remain on $M$. This sequence of updates is also considered in~\citet{katsman2023riemannian, elamvazhuthi2023learning}.

\paragraph{Exponential IAAs on Lie Groups.}
While the exponential map based update can be found in \citet{katsman2023riemannian, elamvazhuthi2023learning}, one key observation we make is that for matrix Lie Groups the architecture can be simplified considerably. Particularly, for Lie groups we do not need to project onto $T_gG$ at every layer. The tangent space of $G$ at $g \in G$, $T_g G$, can be characterized via the tangent space of $G$ at the identity element $e \in G$, $T_e G$, which is canonically identified with the Lie algebra $\mathfrak{g}$ of $G$.  Instead of producing a matrix and then projecting it onto the tangent space, a network $\tilde{f}_\theta$ can directly outputs coefficients in a fixed basis of the Lie algebra, which are then mapped back to the group via the exponential map. Let $\{E_1,\dots,E_n\}$ be a basis of $\mathfrak{g}$ ($=T_e G$). Then $\{E_1 g,\dots,E_n g\}$ forms a basis of $T_g G$. The update then takes the simplified form
\begin{equation}
\label{eq:lie-update-final}
g^{\ell+1}
=\exp_e\!\left(
\Delta t \sum_{i=1}^n \left(\hat{f}_\theta(\ell,g^\ell )\right)_i \, E_i \right) g^\ell,
\end{equation}
where $(\cdot)_i$ denotes the $i^{\text{th}}$ coordinate.
That this update is equivalent to \ref{eq:exp-iaa} is derived in Appendix~\ref{sec:liegroup}

\section{Final Augmented Architectures}
\label{sec:FAA}


In this section, we introduce FAA, and establish basic approximation guarantees. 

\textbf{Projected FAAs.}
We consider architectures that enforce geometric constraints only at the output layer via an explicit projection. Conceptually, these models first compute an unconstrained approximation in the ambient space and then apply a single geometric correction to map the output back onto the target manifold. This approach does not guarantee that intermediate representations remain feasible, it is simpler to implement and computationally cheaper than enforcing constraints at every layer.

Let $P_M : \Omega \rightarrow M$  be a well-defined projection onto $M \subset \mathbb{R}^d$, defined on an open neighborhood $\Omega \subset \mathbb{R}^d$ containing $M$. Suppose $f_{\theta}: \mathbb{R}^d \rightarrow \mathbb{R}^d$ is a parametrized approximation function. The corresponding constrained approximation is defined by composition with the projection map,
\begin{equation}
\label{eq:proj-faa}
    \tilde{f}_{\theta} := P_M \circ f_{\theta}.
\end{equation}
Intuitively, if the unconstrained approximation $f_{\theta}$ is ``close'' to the target map $F$ in the ambient space and remains within the region where the projection is well defined, then the projection step should not significantly distort the approximation. The following theorems formalizes this intuition.

\begin{restatable}{theorem}{ProjApprox}
    \label{thm:ProjApprox}
Let $M \subset \mathbb{R}^d$ be a uniformly prox-regular set with positive reach, and let $P_M: \Omega \rightarrow M$ denote the metric projection onto $M$, where $\Omega \subset \mathbb{R}^d$ is a bounded neighborhood of $M$ contained within the reach of $M$. Suppose $F: \Omega \rightarrow M$ is a continuous target map, and let $f_{\theta}: \mathbb{R}^d \rightarrow \mathbb{R}^d$ satisfy $\|F(x) - f_{\theta}(x)\| \le \varepsilon$, for all $x \in \Omega$. Additionally, assume that $\varepsilon >0$ is small enough, so that the image of $f_{\theta}$ lies within the reach of $M$. Then the projected approximation satisfies
\[
\|F(x) - P_M(f_{\theta}(x))\| \le 2\varepsilon, ~~ \forall x \in \Omega.
\]
\end{restatable}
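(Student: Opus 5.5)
The plan is a triangle inequality combined with the defining minimality of the metric projection. Fix $x \in \Omega$ and set $y := f_\theta(x)$. By hypothesis $\varepsilon$ is small enough that $y$ lies within the reach of $M$. Hence $y$ has a unique nearest point $P_M(y) \in M$, and $P_M(y)$ is a genuine single-valued point. This first step only checks that $P_M \circ f_\theta$ is well defined, which the hypotheses on $\Omega$ and the image of $f_\theta$ guarantee.

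The key step uses $F(x) \in M$, since $F$ maps into $M$. By the definition of the metric projection, $P_M(y)$ minimizes the distance from $y$ to points of $M$, so
\[
\|y - P_M(y)\| = \operatorname{dist}(y, M) \le \|y - F(x)\| \le \varepsilon .
\]
Then the triangle inequality gives
\[
\|F(x) - P_M(f_\theta(x))\| \le \|F(x) - f_\theta(x)\| + \|f_\theta(x) - P_M(f_\theta(x))\| \le \varepsilon + \varepsilon = 2\varepsilon .
\]
Since $x \in \Omega$ was arbitrary, this is the claimed uniform bound.

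There is no genuine obstacle here. The only point needing care is that uniqueness of the projection, which comes from positive reach (uniform prox-regularity), is used solely to make $P_M$ a well-defined map, not to obtain the estimate. In fact, any selection of the possibly set-valued projection would satisfy the same $2\varepsilon$ bound. I would remark that the Lipschitz continuity of $P_M$ on the tube, with constant $\alpha/(\alpha - r)$, is not needed. That property would only matter for a sharper variant comparing $P_M(f_\theta(x))$ with $P_M(F(x)) = F(x)$, which would give the constant $\alpha/(\alpha-\varepsilon)$ instead of $2$.
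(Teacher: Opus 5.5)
Your proposal is correct and matches the paper's proof essentially line for line: you use $F(x)\in M$ to get $\operatorname{dist}(f_\theta(x),M)\le\|f_\theta(x)-F(x)\|\le\varepsilon$, then apply the triangle inequality to obtain the $2\varepsilon$ bound. Your side remarks are also correct but not needed for the statement: any selection of the projection satisfies the same bound, and Federer's Lipschitz estimate would give the sharper constant $\alpha/(\alpha-\varepsilon)$.
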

The proof is in \Cref{sec:ProjApprox}. A weak aspect of the theorem is that the $\varepsilon$ is required to be bounded from above by the reach. In the following theorem, we prove a stronger approximation result, removing the reach assumption, but proving approximation in a weaker norm.  
\begin{restatable}{theorem}{ProjApproxLtwo}
\label{thm:ProjApproxL2}
Let $M\subset \mathbb{R}^d$ be a nonempty closed set and let $\Omega \subseteq \mathbb{R}^d$ be a measurable set with finite measure $\mu (\Omega) < \infty$ for some Borel measure $\mu$. Let $F:\Omega\to M$ be measurable, and suppose that
$f_\theta:\Omega\to\mathbb R^d$ satisfies
$ \|F-f_\theta\|_{L^2_\mu(\Omega)}\le \varepsilon$.
Then, for
every $\eta>0$, there exists $v\in B_\eta(0)$ such that
$
\tilde f_{\theta,v}(x):=P_M(f_\theta(x)+v)
$
is well defined for $\mu$-almost every $x\in\Omega$, and
\[
\|F-\tilde f_{\theta,v}\|_{L^2_\mu(\Omega)} \le 2\varepsilon+\eta.
\]
\end{restatable}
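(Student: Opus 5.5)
The plan is to combine a pointwise inequality with a measure-theoretic fact about where the metric projection fails to be unique. Let $S\subset\mathbb{R}^d$ be the set of points with more than one nearest point in $M$. The pointwise step does not need uniqueness. For any $z\in\mathbb{R}^d$ and any selection $p\in P_M(z)$, since $F(x)\in M$,
\[
\|F(x)-p\|\le \|F(x)-z\|+\|z-p\| = \|F(x)-z\|+\dist(z,M)\le 2\|F(x)-z\|.
\]
Taking $z=f_\theta(x)+v$ gives
\[
\|F(x)-\tilde f_{\theta,v}(x)\|\le 2\|F(x)-f_\theta(x)\|+2\|v\|.
\]
Minkowski's inequality in $L^2_\mu(\Omega)$ then yields
\[
\|F-\tilde f_{\theta,v}\|_{L^2_\mu}\le 2\varepsilon+2\|v\|\,\mu(\Omega)^{1/2}.
\]
So it suffices to take $\|v\|\le \eta/(2\mu(\Omega)^{1/2})$ (together with $\|v\|<\eta$), provided such a $v$ also makes the projection well defined $\mu$-a.e.

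The measure-theoretic fact is that $S$ has Lebesgue measure zero for any nonempty closed $M$. To see this, write $d_M=\dist(\cdot,M)$ and note that $g(z)=\tfrac12\|z\|^2-\tfrac12 d_M(z)^2=\sup_{y\in M}\big(\langle z,y\rangle-\tfrac12\|y\|^2\big)$ is convex. By Rademacher's or Alexandrov's theorem, $g$ is differentiable Lebesgue-a.e. At a differentiability point every nearest point $y$ lies in $\partial g(z)$, which is then the singleton $\{\nabla g(z)\}$, so the projection is unique there. I also need measurability of $x\mapsto P_M(f_\theta(x)+v)$. This follows because off $S$ it equals $\nabla g$, a Borel map, and on the null set one can use any measurable selection.

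The main obstacle is that $\mu$ is an arbitrary Borel measure, so $f_\theta(x)$ may concentrate on $S$ and a.e. uniqueness does not transfer directly to the pushforward. This is where the random shift $v$ is needed, via a Fubini argument. Let $\nu=(f_\theta)_\#\mu$, a finite measure, and average over $v$ uniformly distributed on a small ball $B_r(0)$ with $r\le\min(\eta,\eta/(2\mu(\Omega)^{1/2}))$. Then
\[
\int_{B_r}\mu\{x: f_\theta(x)+v\in S\}\,dv=\int\big|B_r\cap(S-z)\big|\,d\nu(z)=0,
\]
since each translate $S-z$ is Lebesgue-null. Hence for Lebesgue-a.e. $v\in B_r(0)$ we have $f_\theta(x)+v\notin S$ for $\mu$-a.e. $x$. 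Fixing any such $v$ makes $\tilde f_{\theta,v}$ well defined $\mu$-a.e., and the bound from the first paragraph gives $2\varepsilon+\eta$. For this Fubini step I must verify joint measurability of $(x,v)\mapsto \mathbf 1_S(f_\theta(x)+v)$. I will do so by showing $S$ is Borel: it is an $F_\sigma$ set, namely the union over $k$ of the closed sets of points having two nearest points at distance $\ge 1/k$ from each other.
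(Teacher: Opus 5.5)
Your proposal is correct and follows the paper's proof essentially step for step. Both use the pointwise bound $\|F(x)-P_M(z)\|\le 2\|F(x)-z\|$, then Minkowski to get $2\varepsilon+2\|v\|\mu(\Omega)^{1/2}$, then a Tonelli/Fubini average over shifts $v$ in a small ball, using that the non-uniqueness set is Lebesgue-null. The differences are minor: you prove the null-set fact yourself via convexity of $\tfrac12\|z\|^2-\tfrac12\dist(z,M)^2$ instead of citing Erd\H{o}s, and you supply the measurability checks (Borel-ness of $S$, joint measurability, measurability of $\tilde f_{\theta,v}$) and the explicit choice of radius, all of which the paper leaves implicit.
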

The proof is in \Cref{sec:ProjApproxL2}. The assumption that an architecture produces a map \(f_\theta:\Omega\to \mathbb{R}^d\) with \(\|f_\theta-F\|_{L^2_\mu(\Omega)}<\delta\) is mild and allows the application of standard approximation results. In particular, since \(F\) is measurable and bounded almost everywhere, one may first approximate \(F\) in \(L^2_\mu(\Omega)\) by a smooth map \(\tilde f:\Omega\to \mathbb{R}^d\) (using density of smooth functions in \(L^2_\mu\); see \citep[Corollary~4.2.2]{Bog07}, viewing \(\mu\) as a measure on \(\mathbb R^d\), that is supported on the embedded manifold \(M\)). Next, one can approximate \(\tilde f\) uniformly on the compact
set \(\Omega\) by a chosen function class (e.g., multi-layer perceptrons via classical universal
approximation theorems). Combining these two steps yields parameters \(\theta\) such that
\(\|f_\theta-F\|_{L^2_\mu(\Omega)}\) is arbitrarily small.

\textbf{Exponential FAAs.} Similar to IAA, we consider the case where $M$ is a manifold without boundary. And, instead of the projection map, we use the exponential map from a base point $p$ i.e., approximation classes of the form $\exp_p(\cdot)$ for some fixed base point $p \in M$. The corresponding constrained approximation is defined by composition with the projection map,
\begin{equation}
\label{eq:exp-faa}
    \tilde{f}_{\theta} :=\exp_p \circ f_{\theta}.
\end{equation}

Using the compositions with the exponential map can prevent approximations of continuous functions on manifolds. For example,  if $F: M \rightarrow M$ is a  homeomorphism and $F = \exp_x \circ f$ and $f: M \rightarrow T_xM$ would imply that $f : M \rightarrow T_xM$ is a topological embedding. Since $M$ is of same dimension as $T_xM = \Rd$, this is not always possible. For example, if $M = S^2$.  Such topological  obstructions have been identified for approximation properties of autoencoders \citep{batson2021topological,kvalheim2023should}. If one relaxes the requirements of continuity or approximation in the uniform norm it is possible to establish an approximation result in a weaker norm as in Theorem \ref{thm:ProjApproxL2}. Expressibility of autoencoders in weaker norms has been shown in \cite{kvalheim2023should}, despite topological obstruction in stronger uniform norm. A similar idea works in our setting. Toward this end, we establish a representation result in the next theorem, giving us an exponential map analogue of Theorem \ref{thm:ProjApproxL2}. Here $M$ is not required to be embedded into a Euclidean space.


\begin{restatable}{theorem}{ExpExact}
\label{thm:exp_exact}
Let $(M,g)$ be a geodesically complete finite‑dimensional connected Riemannian manifold with geodesic distance $\operatorname{dist}_M$.
Fix a base point $p\in M$, and denote by $\exp_p\colon T_pM\to M$ the exponential map at $p$.
Let $\Omega\subset\mathbb R^d$ be compact, and let $F\colon \Omega\to M$ be continuous. Then there exists a measurable function $f\colon\Omega\to T_pM$ with bounded range such that for all $x\in\Omega$, $
F(x)= \exp_p\left(f(x)\right).$
Suppose further that $f_\theta: \Omega\to T_p M$ satisfies $\|f_\theta - f\|_{L^2_{\mu}(\Omega)} < \delta$, for some $\delta>0$ and a Borel measure $\mu$ on $M$.
Let $K \subset T_pM$ be a compact set containing the images of both $f_\theta$ and $f$, and let $\mathscr{L}>0$ be a Lipschitz constant for $\exp_p$ restricted to $K$. Then  
\[
\left\| \operatorname{dist}_M\left(F(x), \,\exp_p(f_\theta(x))\right)\right\|_{L^2_{\mu}(\Omega)} < \mathscr{L}\,\delta.
\]
\end{restatable}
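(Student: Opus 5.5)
The proof has two parts: first a measurable bounded lift of $F$ through $\exp_p$, then an $L^2$ transfer via the Lipschitz bound.

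\textbf{Step 1: the image is compact.} Since $\Omega$ is compact and $F$ is continuous, $F(\Omega)$ is compact in $M$. Hence $R:=\sup_{x\in\Omega}\operatorname{dist}_M(p,F(x))<\infty$.

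\textbf{Step 2: a measurable bounded right inverse of $\exp_p$.} By the Hopf--Rinow theorem (geodesic completeness plus connectedness), every $q\in M$ is joined to $p$ by a minimizing geodesic. So there is $v\in T_pM$ with $\exp_p(v)=q$ and $|v|_g=\operatorname{dist}_M(p,q)$. Set $\bar B_R:=\{v\in T_pM: |v|\le R\}$; it is compact because $T_pM$ is finite dimensional. The restriction $\exp_p|_{\bar B_R}\colon \bar B_R\to M$ is continuous, and it is surjective onto the closed ball $\bar B^M_R(p)\supseteq F(\Omega)$. Consider the set-valued map $q\mapsto \Gamma(q):=\exp_p^{-1}(q)\cap \bar B_R$. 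Its values are nonempty and compact, and its graph is closed, since it is the preimage of the diagonal under the continuous map $(q,v)\mapsto(q,\exp_p v)$ intersected with $M\times\bar B_R$. A closed-graph multifunction with compact values into a compact metric space is upper semicontinuous, hence Borel measurable. The Kuratowski--Ryll-Nardzewski selection theorem then gives a Borel selection $s\colon \bar B^M_R(p)\to \bar B_R$ with $\exp_p\circ s=\mathrm{id}$.

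An alternative avoiding the selection theorem is to take the lexicographic minimum of $\Gamma(q)$ in a fixed basis, which is Borel for compact-valued upper semicontinuous maps. Either way, set $f:=s\circ F$. It is Borel measurable, has range in $\bar B_R$ (so bounded range), and satisfies $\exp_p(f(x))=F(x)$ for every $x\in\Omega$.

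\textbf{Step 3: the $L^2$ estimate.} For each $x\in\Omega$, both $f(x)$ and $f_\theta(x)$ lie in $K$. Using the representation from Step 2 and the Lipschitz property of $\exp_p$ on $K$ (with respect to $\operatorname{dist}_M$),
\[
\operatorname{dist}_M\bigl(F(x),\exp_p(f_\theta(x))\bigr)=\operatorname{dist}_M\bigl(\exp_p(f(x)),\exp_p(f_\theta(x))\bigr)\le \mathscr{L}\,\|f(x)-f_\theta(x)\|.
\]
Squaring, integrating against $\mu$, and taking square roots gives
\[
\bigl\|\operatorname{dist}_M(F,\exp_p\circ f_\theta)\bigr\|_{L^2_\mu(\Omega)}\le \mathscr{L}\,\|f-f_\theta\|_{L^2_\mu(\Omega)}<\mathscr{L}\delta.
\]
Measurability of the integrand holds because it is a composition of continuous maps with the measurable maps $f$ and $f_\theta$.

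\textbf{Main obstacle.} The main obstacle is Step 2. A continuous lift generally does not exist, because of the cut locus and the topological obstruction noted before the theorem. Measurability therefore has to come from a selection theorem, and the key point is verifying that $\Gamma$ is measurable, via compactness of $\bar B_R$ and closedness of its graph.
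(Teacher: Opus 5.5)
Your proof is correct and follows essentially the paper's argument. Compactness of $F(\Omega)$ and Hopf--Rinow give preimages of $F(x)$ under $\exp_p$ in the closed ball of radius $R=\max_{x\in\Omega}\operatorname{dist}_M(p,F(x))$; a measurable selection theorem applied to a closed-graph, compact-valued multifunction gives the bounded measurable lift; and the pointwise Lipschitz bound is then squared and integrated against $\mu$. The one variation is that you select a single Borel section $s$ of $\exp_p$ over $\bar B^M_R(p)$, without requiring minimizing velocities, and set $f=s\circ F$, whereas the paper selects directly over $\Omega$ from $x\mapsto\{v:\exp_p(v)=F(x),\ \|v\|=\operatorname{dist}_M(p,F(x))\}$; your version spells out the step closed graph into a compact set $\Rightarrow$ upper semicontinuous $\Rightarrow$ measurable, and uses continuity of $F$ only to bound $F(\Omega)$.
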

The proof is given in \Cref{sec:proof_exp_exact}. 

\section{Learned Projections via Flow Matching}
\label{sec:LearntProj}
While the methods discussed in previous sections allow us to define projected networks when an explicit projection map onto a set is available, there are many situations where this projection map is not known in closed form and must instead be learned from data. We present a method for constructing an approximate projection map using diffusion models and flow matching, based on ideas regarding the projection-like behavior of diffusion models \citep{permenter2023interpreting}. This is closely connected to \textit{Varadhan's asymptotics of the heat kernel} approximating the distance function \citep{varadhan1967behavior, malliavin1996short}. We use this method to learn a projection $P_M:\mathbb{R}^d \to M$. Specifically, we learn a time-dependent vector field $v_\theta:\mathbb{R}^d \times[0,T]\to\mathbb{R}^d$, parameterized by $\theta$, to recover a projection $\tilde{P}_M \approx P_M$ by integrating the reverse-time dynamics. 

For $x,y \in \mathbb{R}^d$ and $t \in \mathbb{R}$, let $K(t, x, y)$ denote the heat kernel:
$$
K(t, x, y) = \frac{1}{(4\pi t)^{d/2}} \exp\left( -\frac{\|x - y\|^2}{4t} \right).
$$
It defines the transition density of the Brownian motion $W(t)$ starting at $y$:
$$
\mathbb{P}( W(t) \in dx \mid W(0) = y ) = K(t, x, y) \, dx.
$$
From a generalization of Varadhan's formula \citep{norris1997heat, hino2003small}, we know that for any measurable set $M$, $-\lim_{t \to 0} 4t \log \mathbb{P}_x(W(t) \in M) = \operatorname{dist}^2(x, M)$, where $\operatorname{dist}(x, M) = \inf_{y \in M} \|x - y\|$ is the Euclidean distance from $x$ to $A$. Therefore, for small $t$, the heat kernel smoothed density $u_t(x) = \int_M K(t, x, y) \, d\mu(y)$ approximately satisfies:
$$
\log u_t(x) \approx -\frac{\operatorname{dist}^2(x, A)}{4t} + \text{const}.
$$
Differentiating both sides gives:
\[
\nabla_x \log u_t(x) \approx -\frac{1}{2t} \nabla_x \operatorname{dist}^2(x, M).
\]
Since $\nabla_x \operatorname{dist}^2(x, M) = 2(x - P_M(x))$, we have $\nabla_x \log u_t(x) \approx -\frac{x - P_M(x)}{t}$, and hence:
\[
    P_M(x) \approx x + t \nabla_x \log u_t(x).
\]
This shows that, for small $t$, the gradient of the smoothed density $u_t$ provides a good approximation to the displacement from $x$ to $P_M(x)$. 

In the following theorem, we prove an approximation result justifying this method of constructing the projection. This can be viewed as a deterministic analogue of Proposition 3.1 in \citet{permenter2023interpreting} and gives a uniform small-noise ($t \downarrow 0$) guarantee that the score-induced map $x \mapsto x + t \nabla \log u_t(x)$ returns the metric projection $P_M(x)$ up to $O(t^{1/2})$ on neighborhoods of the manifold. 

\begin{restatable}{theorem}{loggradprojection} 
\label{thm:log_grad_projection}
    Let $M \subset \mathbb{R}^d$ be a $C^{\infty}$ compact embedded $m$-dimensional submanifold without boundary and positive reach $\alpha>0$, with the Riemannian metric inherited by the embedding. For $x\in \mathbb{R}^d$ and $t>0$, define
    \[
        u_t(x) = \int_M \frac{1}{(2\pi t)^{d/2}} \exp\!\left(-\frac{\|x - y\|^2}{2t}\right) \, d\mu(y),
    \]
    where $d\mu$ is the induced Riemannian measure on $M$. Then for every compact set $K \subset \{x \in \mathbb{R}^d : \operatorname{dist}(x,M) < \alpha\}$, there exists a constant $C_K < \infty$ such that for all $x \in K$ and all sufficiently small $t>0$,
    \[
        \left\| \nabla_x \log u_t(x) + \frac{x - P_M(x)}{t} \right\| \le C_K \,t^{-1/2},
    \]
    where $P_M(x)$ denotes the (unique) metric projection of $x$ onto $M$. Equivalently, on compact subsets of the tubular neighborhood of $M$, as $t \downarrow 0$,
    \[
        x + t\,\nabla_x \log u_t(x) = P_M(x) + O(t^{1/2}), \quad \text{uniformly.}
    \]
\end{restatable}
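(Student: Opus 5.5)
Our plan is to rewrite the score as a posterior mean and then localize that posterior by Laplace's method. Differentiating under the integral gives
\[
\nabla_x \log u_t(x) = \frac{1}{t}\bigl(m_t(x) - x\bigr), \qquad m_t(x) := \frac{\int_M y\, e^{-\|x-y\|^2/(2t)}\,d\mu(y)}{\int_M e^{-\|x-y\|^2/(2t)}\,d\mu(y)},
\]
so $\nabla_x\log u_t(x) + (x-P_M(x))/t = (m_t(x) - P_M(x))/t$. The claim is therefore equivalent to $\|m_t(x)-P_M(x)\| \le C_K t^{1/2}$, uniformly for $x\in K$. In words, the Gibbs measure $\nu_{t,x}(dy)\propto e^{-\|x-y\|^2/(2t)}d\mu(y)$ on $M$ must have barycenter within $O(\sqrt t)$ of $p:=P_M(x)$. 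We will bound $\|m_t(x)-p\|\le \int_M\|y-p\|\,d\nu_{t,x}(y)$ and show that the right-hand side is $O(\sqrt t)$.

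\emph{Step 1 (quadratic growth away from $p$).} Write $\phi_x(y)=\tfrac12(\|x-y\|^2-\|x-p\|^2)\ge 0$. Since $x-p\in N^P_M(p)$, inequality \eqref{eq:posreach} with $v=x-p$ gives
\[
\phi_x(y)=\tfrac12\|y-p\|^2-\langle x-p,y-p\rangle\ge \tfrac12\Bigl(1-\tfrac{\|x-p\|}{\alpha}\Bigr)\|y-p\|^2 .
\]
Compactness of $K$ yields $\sup_K \dist(x,M)\le \alpha-\kappa$ for some $\kappa>0$, hence $\phi_x(y)\ge c\|y-p\|^2$ with $c=\kappa/(2\alpha)$ uniformly in $x\in K$. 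If instead the stated prox-regularity inequality is only available with constant $\alpha$ for unit normals, we will fall back on compactness and continuity: the minimizer is unique on the compact set $K\times M$, and the Hessian of $\phi_x$ at $p$ is $I-\langle x-p,\mathrm{II}_p\rangle$, which is uniformly positive definite on $K$. Either route gives the uniform quadratic lower bound.

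\emph{Step 2 (Laplace bounds).} Because $\|y-p\|\le \mathrm{diam}(M)$ and the factor $e^{-\|x-p\|^2/(2t)}$ cancels in $\nu_{t,x}$,
\[
\int_M\|y-p\|\,d\nu_{t,x}\le \frac{\int_M \|y-p\|\,e^{-c\|y-p\|^2/t}\,d\mu(y)}{\int_M e^{-\phi_x(y)/t}\,d\mu(y)} .
\]
For the numerator, use the uniform volume bound $\mu(B_r(p)\cap M)\le C r^m$, valid on compact smooth $M$, together with a layer-cake decomposition. This gives numerator $\le C t^{(m+1)/2}$. For the denominator, use the upper bound $\phi_x(y)\le C'\|y-p\|^2$, which holds since $\phi_x(y)\le \tfrac12\|y-p\|^2+\|x-p\|\,\|y-p\|^2/\alpha'$ near $p$ by smoothness of $M$. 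Combine this with the lower volume bound $\mu(B_r(p)\cap M)\ge c r^m$ for $r\le r_0$, obtained from uniform normal coordinates (injectivity-radius type bounds). This gives denominator $\ge c'\, t^{m/2}$. The ratio is then $O(t^{1/2})$, uniformly in $x\in K$.

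\emph{Step 3 (conclusion).} Dividing by $t$ gives the $C_K t^{-1/2}$ bound, and multiplying by $t$ gives the equivalent form. The main obstacle we expect is keeping every constant uniform in $x\in K$. In particular, the linear term $\langle x-p,y-p\rangle$ in $\phi_x$ must not spoil the quadratic upper bound, which needs $x-p\perp T_pM$ so that $\langle x-p,y-p\rangle=O(\|y-p\|^2)$. We will handle this via a uniform second-order Taylor expansion of the embedding in normal coordinates around each $p\in M$, with constants controlled by the second fundamental form over the compact $M$.
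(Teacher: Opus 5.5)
Your proof is correct, but it follows a genuinely different route from the paper's.

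\emph{The paper's route.} It works in a local graph chart $y(s)=y^*+s+h(s)$ around $y^*=P_M(x)$ and splits $M$ into near and far regions. It expands $\|x-y(s)\|^2=\mathsf{d}^2+\langle Q_n s,s\rangle+E(s)$, with $Q_n=I-S_n$ and $E(s)=O(\|s\|^3)$. From Gaussian moments it derives $D_t=e^{-\mathsf{d}^2/(2t)}(2\pi t)^{m/2}(\det Q_n)^{-1/2}(1+O(t^{1/2}))$, and the same expression with $1$ replaced by $n$ for $A_t$. It then takes the ratio.

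\emph{Your route.} You rewrite the error as $(m_t(x)-P_M(x))/t$ and bound the barycenter displacement by the absolute first moment of the Gibbs measure. This only needs two-sided bounds of the same order $t^{m/2}$:
\begin{itemize}
\item the global quadratic growth $\phi_x(y)\ge\frac{\kappa}{2\alpha}\|y-p\|^2$ from Federer's reach inequality, which disposes of the far region with no decomposition and plays the role of the paper's coercivity $Q_n\succeq\delta I$;
\item the matching quadratic upper bound coming from $x-p\perp T_pM$;
\item uniform volume bounds $c\,r^m\le\mu(B_r(p)\cap M)\le C\,r^m$.
\end{itemize}

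\emph{What your route buys.} It is shorter and more robust. You never need the constant $(\det Q_n)^{-1/2}$, the Jacobian expansion, or a Taylor expansion of $e^{-E(s)/(2t)}$ inside a Gaussian integral. That expansion is the delicate step of the paper's argument. It is only legitimate once $r_0$ is small enough that $|E(s)|\le\frac{\delta}{2}\|s\|^2$ on the chart. Extending $E$ to $\R^m$ with merely $|E(s)|\le C\|s\|^3$ does not by itself give Gaussian integrability. Uniformity over $K$ also comes for free from $\|x-p\|\le\alpha-\kappa$ and the uniform geometry of the compact $M$.

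\emph{What the paper's route buys.} It gives finer information: the leading-order asymptotics of $u_t$ itself. If one also tracks the vanishing of odd Gaussian moments, it would sharpen the rate to $O(t)$. Your step $\|\int(y-p)\,d\nu_{t,x}\|\le\int\|y-p\|\,d\nu_{t,x}$ discards that cancellation and is intrinsically capped at $O(t^{1/2})$. That is nonetheless exactly what the theorem asserts.

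\emph{A small remark on Step 1.} You use the homogeneous form $\langle v,y-p\rangle\le\frac{\|v\|}{2\alpha}\|y-p\|^2$. This is just the unit-normal case rescaled, so your fallback argument is unnecessary.
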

The proof is provided in \Cref{sec:log_grad_projection}.

In practice, rather than estimating $\nabla_x \log u_t(x)$ directly via score matching, we estimate the corresponding conditional mean velocity $v(t,x)$ using a standard flow-matching framework \citep{lipman2022flowmatching}. By training a neural vector field $v_\theta$ to minimize the flow-matching loss over perturbed samples, the learned backward ODE $\dot{x}(s) = -v_\theta(t, x(s))$ integrated as $t \to 0$ recovers the required displacement to $M$, yielding our approximate projection operator.

\begin{figure}[b]
    \centering
    \begin{subfigure}[b]{0.38\textwidth}
        \centering
        \includegraphics[width=\columnwidth]{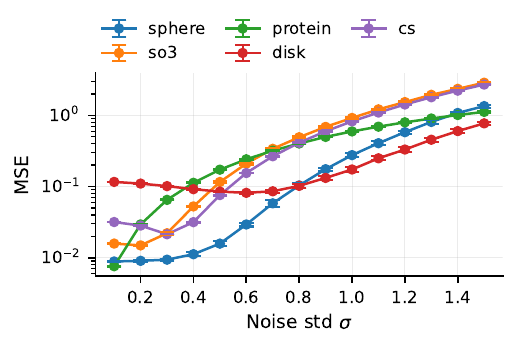}
        \caption{Projection MSE vs. $\sigma$}
        \label{fig:proj_mse}
    \end{subfigure}
    \hspace{1em}
    \begin{subfigure}[b]{0.38\textwidth}
        \centering
        \includegraphics[width=\columnwidth]{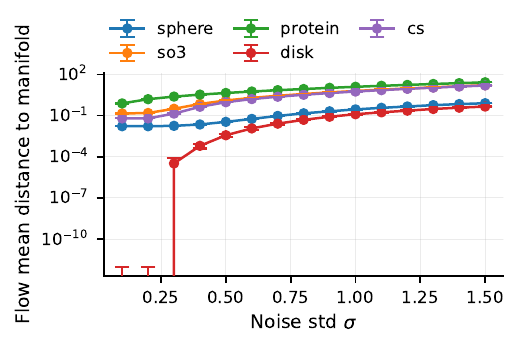}
        \caption{Distance-to-manifold vs. $\sigma$}
        \label{fig:proj_dist}
    \end{subfigure}
    
    \caption{Analysis of projection performance: (a) shows the Mean Squared Error, and (b) shows the distance to the manifold relative to the noise standard deviation $\sigma$. More details on the distance used can be found in Appendix~\ref{app:dist}}
    \label{fig:projection_analysis}
\end{figure}

\section{Numerical Experiments}
\label{sec:experiments}

We evaluate our geometry-preserving architectures across five settings: the Sphere ($S^d$), the Special Orthogonal Group ($\mathrm{SO}(d)$), the closed unit disk (a manifold with boundary), Cucker-Smale (CS) dynamics on $\mathrm{SO}(3)$, and a Protein backbone dataset on $\mathrm{SE}(3)$. Full dataset characterizations, including tangent spaces, metric projections, and exponential maps, are detailed in \Cref{app:examples}. Code to reproduce all results is available at \href{https://anonymous.4open.science/r/constrained-networks-9833/README.md}{Anonymous Github}.

\begin{table*}
\centering
\footnotesize
\setlength{\tabcolsep}{0pt}
\begin{tabular*}{\textwidth}{@{\extracolsep{\fill}} l rr rr rr rr rr @{}}
\toprule
& \multicolumn{2}{c}{Sphere} & \multicolumn{2}{c}{Disk} & \multicolumn{2}{c}{SO(3)} & \multicolumn{2}{c}{CS} & \multicolumn{2}{c}{Protein} \\
\cmidrule{2-3}\cmidrule{4-5}\cmidrule{6-7}\cmidrule{8-9}\cmidrule{10-11}
Model
& Loss & Dist.
& Loss & Dist.
& Loss & Dist.
& Loss & Dist.
& Loss & Dist. \\
\midrule
\texttt{Regular}
& $4.97\mathrm{e}{-3}$ & $2.44\mathrm{e}{-2}$
& $4.95\mathrm{e}{-4}$ & $7.36\mathrm{e}{-3}$
& $1.10\mathrm{e}{-1}$ & $1.40\mathrm{e}{-0}$
& $3.33\mathrm{e}{-1}$ & $2.72\mathrm{e}{-0}$
& $1.19\mathrm{e}{-1}$ & $2.28\mathrm{e}{-0}$ \\
\texttt{Proj IAA}
& $6.29\mathrm{e}{-3}$ & $2.35\mathrm{e}{-8}$
& $4.23\mathrm{e}{-7}$ & $1.09\mathrm{e}{-9}$
& $2.22\mathrm{e}{-1}$ & $8.00\mathrm{e}{-7}$
& -- & --
& $1.90\mathrm{e}{-1}$ & $8.73\mathrm{e}{-7}$ \\
\texttt{Exp IAA}
& $8.86\mathrm{e}{-3}$ & $2.19\mathrm{e}{-9}$
& -- & --
& $3.10\mathrm{e}{-1}$ & $3.14\mathrm{e}{-6}$
& $6.39\mathrm{e}{-1}$ & $4.78\mathrm{e}{-6}$
& $2.39\mathrm{e}{-1}$ & $7.63\mathrm{e}{-3}$ \\
\texttt{Flow IAA}
& $9.69\mathrm{e}{-3}$ & $2.70\mathrm{e}{-2}$
& $1.45\mathrm{e}{-3}$ & $7.93\mathrm{e}{-3}$
& $1.12\mathrm{e}{-1}$ & $1.21\mathrm{e}{-0}$
& $3.35\mathrm{e}{-1}$ & $2.71\mathrm{e}{-0}$
& $1.19\mathrm{e}{-1}$ & $2.28\mathrm{e}{-0}$ \\
\texttt{Prob}
& $7.48\mathrm{e}{-3}$ & $1.50\mathrm{e}{-2}$
& $2.58\mathrm{e}{-3}$ & $0.00\mathrm{e}{-0}$
& $1.68\mathrm{e}{-1}$ & $1.48\mathrm{e}{-0}$
& $3.32\mathrm{e}{-1}$ & $2.72\mathrm{e}{-0}$
& $5.50\mathrm{e}{-1}$ & $2.15\mathrm{e}{-0}$ \\
\texttt{Proj FAA}
& $5.03\mathrm{e}{-3}$ & $2.21\mathrm{e}{-8}$
& $7.33\mathrm{e}{-7}$ & $1.89\mathrm{e}{-9}$
& $2.52\mathrm{e}{-1}$ & $8.26\mathrm{e}{-7}$
& $6.38\mathrm{e}{-1}$ & $8.42\mathrm{e}{-7}$
& $1.88\mathrm{e}{-1}$ & $8.36\mathrm{e}{-7}$ \\
\texttt{Exp FAA}
& $2.16\mathrm{e}{-1}$ & $1.99\mathrm{e}{-9}$
& -- & --
& $2.47\mathrm{e}{-1}$ & $1.47\mathrm{e}{-6}$
& $6.36\mathrm{e}{-1}$ & $1.89\mathrm{e}{-6}$
& $2.41\mathrm{e}{-1}$ & $7.63\mathrm{e}{-3}$ \\
\texttt{Flow FAA}
& $7.69\mathrm{e}{-3}$ & $2.53\mathrm{e}{-2}$
& $8.52\mathrm{e}{-4}$ & $5.44\mathrm{e}{-3}$
& $1.31\mathrm{e}{-1}$ & $1.43\mathrm{e}{-0}$
& $3.34\mathrm{e}{-1}$ & $2.72\mathrm{e}{-0}$
& $1.19\mathrm{e}{-1}$ & $2.28\mathrm{e}{-0}$ \\
\bottomrule
\end{tabular*}
\caption{Test loss and mean distance to the constraint manifold. ``Loss'' refers to Test MSE, and ``Dist.'' refers to Mean Distance. Dashes indicate model/dataset combinations not run, either due to computational cost (CS Proj) or because they are not defined (Disk Exp). More details on the distance metrics used can be found in Appendix~\ref{app:dist}.}
\label{tab:testloss-meanDist}
\end{table*}

\paragraph{Experiment 1: Learned projection via flow matching.} 
First, we evaluate the method detailed in \Cref{sec:LearntProj} for learning an approximate projection map $\tilde P_M$ directly from training data. To assess robustness, we corrupt held-out test samples with isotropic Gaussian noise $\varepsilon \sim \mathcal{N}(0,\sigma^2 I)$ and compare the learned flow-matching projection against the exact analytical projection. Across all datasets, the learned approximation matches the analytical projection closely when points are near the manifold. As corruption $\sigma$ increases, the approximation quality degrades gracefully, aligning cleanly with the theoretical $O(t^{1/2})$ error bound established in \Cref{thm:log_grad_projection}. \Cref{fig:projection_analysis} shows the reconstruction error via MSE as a function of $\sigma$ and post-projection constraint satisfaction via a dataset-specific distance-to-manifold diagnostic. Full training details and noise-evaluation metrics (MSE and distance-to-manifold) are provided in \Cref{app:flowmatch_repro}.

\paragraph{Experiment 2: Learning dynamics on manifolds.}
Next, we evaluate architectures designed to preserve the constraint set $M$. We compare unconstrained baselines (\texttt{Regular}), Intermediate-Augmented Architectures (\texttt{IAA}), Final-Augmented Architectures (\texttt{FAA}), and a probabilistic constrained transformer baseline (\texttt{Prob}) \citep{kratsios2021universal_transformers_constraints}. For IAA and FAA, we test variants using analytic projections (\texttt{Proj}), exponential-map updates (\texttt{Exp}), and learned flow-matching projectors (\texttt{Flow}). Aggregate results measuring test MSE and distance-to-manifold are summarized in Table~\ref{tab:testloss-meanDist}, with full hyperparameter and baseline details in \Cref{app:manifold_training}.

\paragraph{Discussion.}
Across datasets, the \emph{projected} variants display a strictly favorable Pareto tradeoff: they simultaneously achieve low test MSE and minimal distance-to-manifold, whereas unconstrained approaches typically sacrifice one metric for the other. \emph{Exponential} variants also significantly improve constraint satisfaction relative to ambient baselines, confirming that intrinsic updates reduce off-manifold drift. While theoretically supported by \Cref{thm:log_grad_projection}, the \textit{flow-based} architectures do not fully match the performance of analytical projections. This suggests that intermediate hidden states in the network occasionally drift too far from the manifold for the local $O(t^{1/2})$ approximation to reliably correct them. 

\begin{wrapfigure}{r}{0.38\linewidth}
    \vspace{-1.0em}
    \centering
    \includegraphics[scale=0.2]{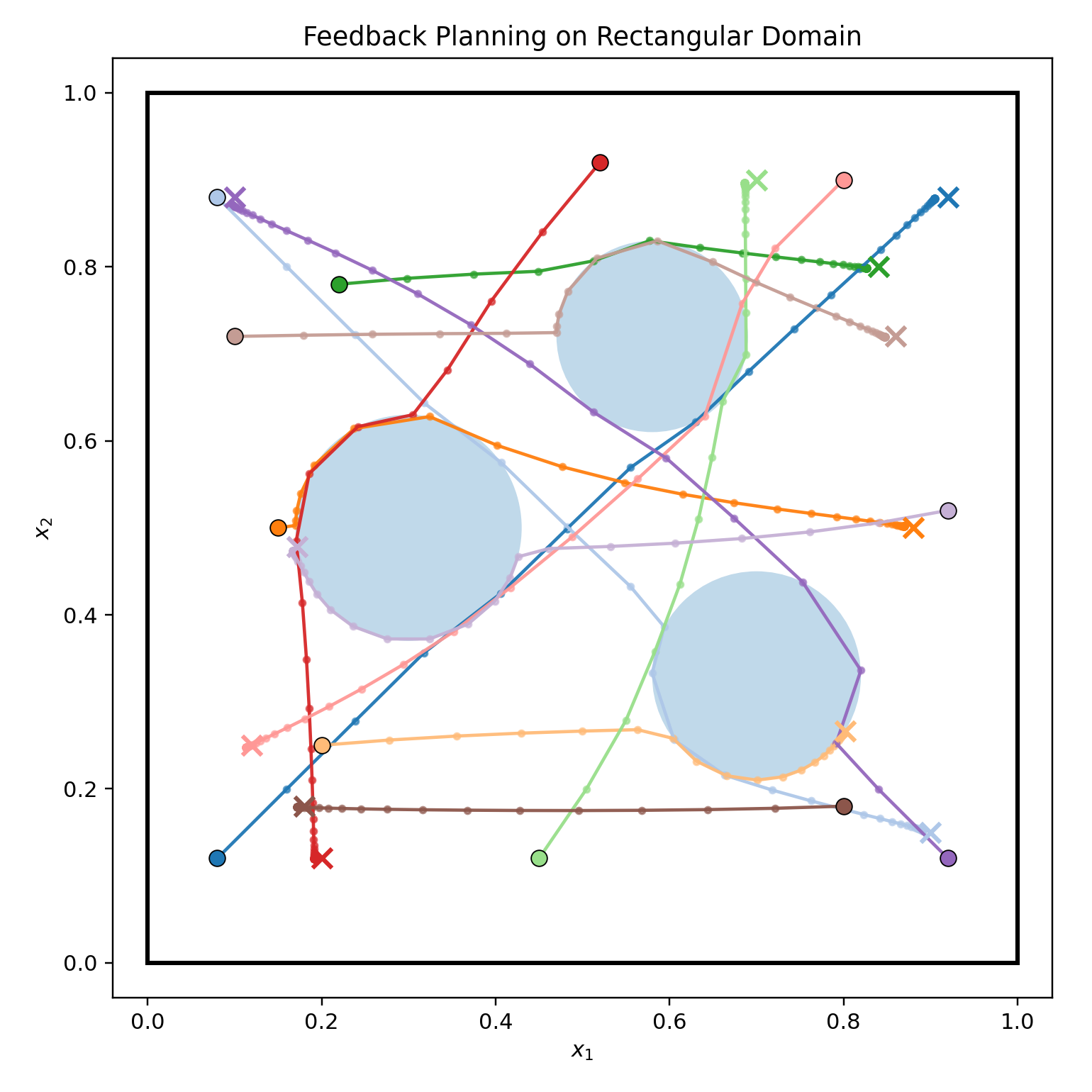}
    \vspace{0.5em}
    \includegraphics[scale=0.2]{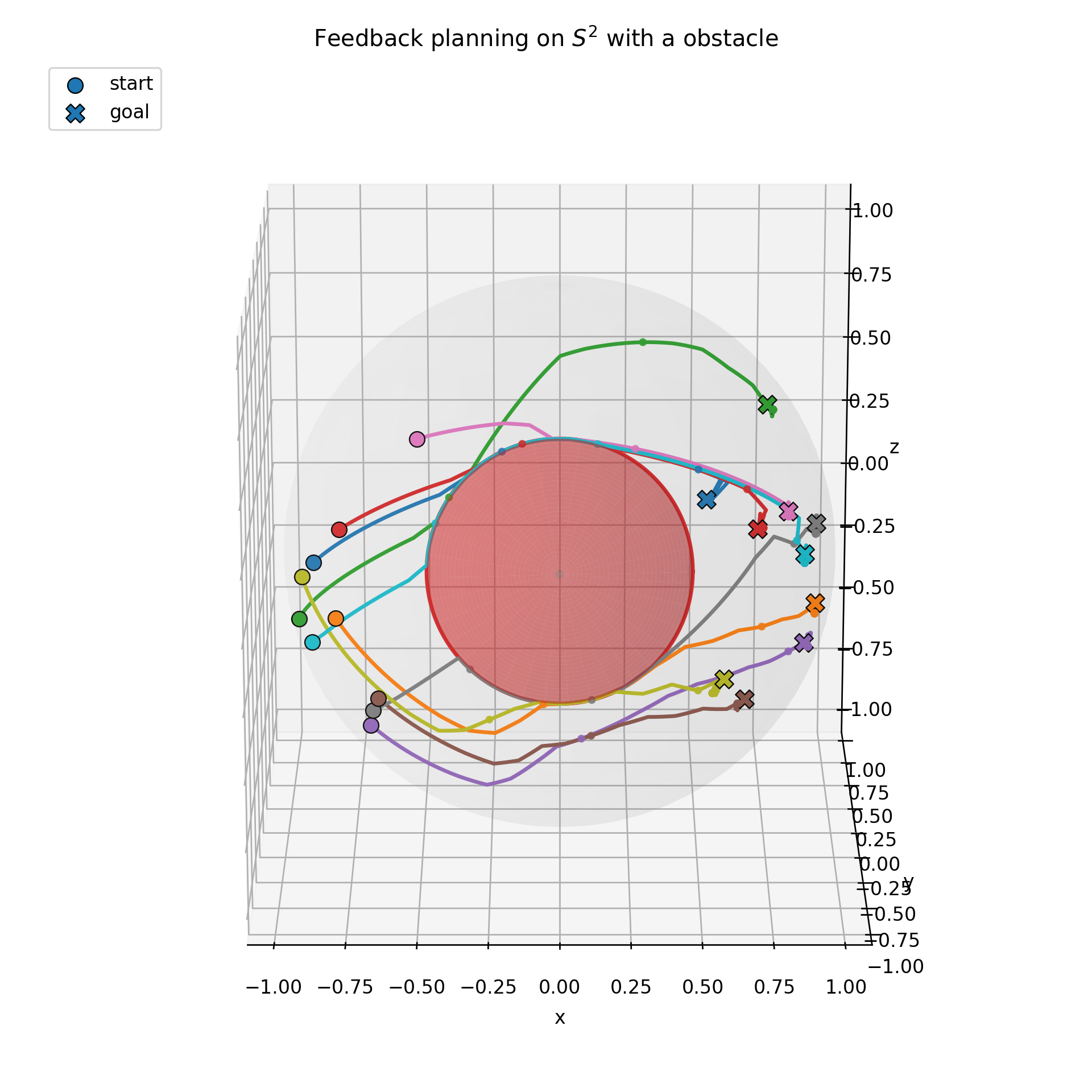}
    \vspace{-0.5em}
    \caption{Closed-loop rollouts of the learned FAA projected planner. $\circ$ mark initial states $x_0$, and $\times$ mark targets $y$. \textit{Top:} 2D domain with circular obstacles. \textit{Bottom:} $S^2$ with one obstacle.}
    \label{fig:path-planning}
    \vspace{-0.5em}
\end{wrapfigure}

Finally, while \texttt{Proj IAA} generally outperforms \texttt{exp IAA}, both approaches yield comparable performance in the FAA setting. Crucially, FAA enjoys a massive computational advantage. Because IAA requires an expensive geometry-enforcing operation at every layer, it becomes computationally intractable for complex geometries (e.g., evaluating \texttt{Proj IAA} on the CS dataset exceeded our compute budget). By enforcing the constraint only at the output layer, FAA sidesteps this bottleneck while maintaining robust approximation power.

\paragraph{Application: Path Planning with Constrained Networks}

To illustrate the utility of FAA, we apply our framework to path planning on configuration spaces with boundaries. Let $M\subset \mathbb{R}^d$ represent a feasible set (e.g., an environment minus obstacles). Given a current state $x \in M$ and target $y \in M$, we learn a feedback map $\Psi_\theta : M\times M \to M$ that generates a sequence of feasible configurations $x_{n+1} = \Psi_\theta(x_n,y)$ approaching $y$. We parameterize the planner as $\Psi_\theta(x,y) := P_M(g_\theta(x,y))$, where an unconstrained network $g_\theta$ proposes a local update toward the target, and the projection $P_M$ ensures strict feasibility. The model is trained by minimizing the one-step loss:
$$
\mathcal{L}(\theta) = \mathbb{E}_{(x,y)} \left[ d(\Psi_\theta(x,y),y)^2 + \lambda \|\Psi_\theta(x,y)-x\|^2 \right].
$$
The first term drives the state toward the target, while the second penalizes large jumps to promote smoothness. We evaluate this planner in two environments: a 2D rectangular domain with three circular obstacles, and the sphere $S^2$ with a circle-like obstacle. As shown in \Cref{fig:path-planning}, the learned feedback map successfully navigates states around obstacles. Because the geometry is enforced at the output layer (FAA), every discrete waypoint $x_n$ is mathematically guaranteed to remain in $M$, ensuring strictly feasible paths.

\paragraph{LLM Usage} LLMs were used to source references and develop a broad strategy for proof of heat kernel estimate of Theorem \ref{thm:log_grad_projection}.

\bibliographystyle{plainnat}
\bibliography{Ref}

\newpage
\appendix
\onecolumn


\section{Notation}
\label{app:Notation}

\begin{table}[h]
\centering
\renewcommand{\arraystretch}{1.6}
\begin{tabularx}{\textwidth}{
  >{\raggedright\arraybackslash}p{0.24\textwidth}
  >{\raggedright\arraybackslash}X
}
\hline
\textbf{Symbol} & \textbf{Description} \\
\hline

$C^m$, $m \in \mathbb{Z}_{>0}$
&
Space of $m$-times continuously differentiable functions.
\\

$T_M(x)$
&
Tangent cone of the set $M$ at $x$:
\[
T_M(x)
=
\left\{
v \in \mathbb{R}^d
\;\middle|\;
\begin{aligned}
&\forall\, x_k \in M,\ x_k \to x,\ \forall\, t_k \downarrow 0,\\
&\exists\, v_k \to v
\text{ such that } x_k + t_k v_k \in M
\end{aligned}
\right\}.
\]
\\

$T_xM$
&
Tangent space of the manifold $M$ at the point $x$.
\\

$TM$
&
Tangent bundle of $M$: $\displaystyle \bigcup_{x \in M} T_xM$.
\\

$P_A(x)$
&
Metric projection of $x \in \mathbb{R}^d$ onto the set $A$:
\[
P_A(x) = \arg\min_{y \in A} \|x-y\|_2,
\qquad x \in \mathbb{R}^d.
\]
\\

$N^P_M(x)$
&
Proximal normal cone to $M$ at $x$:
\[
N^P_M(x)
=
\left\{
v \in \mathbb{R}^d
\;\middle|\;
\exists\, r > 0
\text{ such that }
x \in P_M(x+rv)
\right\}.
\]
\\

$\|g\|_\infty$, $g \in C([0,T] \times M; \mathbb{R}^d)$
&
$ \|g\|_\infty = \sup \limits_{\substack{t \in [0,T] \\ x \in M}}
\|g(t,x)\|.$
\\

$B(x,r)$
&
Open Euclidean ball of radius $r > 0$ centered at $x \in \mathbb{R}^d$.
\\

$Df$, $\partial_i f$
&
Jacobian, or total derivative, of $f$, and partial derivative of $f$ with respect to the $i$-th coordinate.
\\

\hline
\end{tabularx}
\label{tab:notation}
\end{table}

\section{Proofs}

\subsection{Proof of Theorem \ref{thm:NeuralODEApprox}}
\label{sec:NeuralODEApprox}

\NeuralODEApprox*
\begin{proof}
Fix $x_0\in M$ and set $x(t):=x(t;x_0)$, $y(t):=y(t;x_0)$, and $e(t):=x(t)-y(t)$.
For a.e.\ $t$, define 
\[
\xi_x(t):=F(t,x(t)) - P_{T_M(x(t))}\left(F(t,x(t))\right),\qquad
\xi_y(t):=f_\theta(t,y(t)) - P_{T_M(y(t))}\left(f_\theta(t,y(t))\right).
\]
It is known (for instance, from \citep[Lemma 4.6]{hauswirth2021projected} that
\[
\xi_x(t)\in N^P_M(x(t)),\qquad \xi_y(t)\in N^P_M(y(t)),
\]
and the dynamics can be rewritten as
\[
\dot x(t)=F(t,x(t))-\xi_x(t),\qquad \dot y(t)=f_\theta(t,y(t))-\xi_y(t).
\]
In particular, since projections onto the Tangent cone are non-expansive, $\|\xi_x(t)\|\le \|F(t,x(t))\|\le U$ and
$\|\xi_y(t)\|\le \|f_\theta(t,y(t))\|\le U$.

Differentiate $\frac12\|e(t)\|^2$:
\begin{align*}
\frac12\frac{d}{dt}\|e(t)\|^2
&=\langle e(t),\dot x(t)-\dot y(t)\rangle\\
&=\langle e(t),F(t,x(t))-f_\theta(t,y(t))\rangle - \langle e(t),\xi_x(t)-\xi_y(t)\rangle\\
&=\langle e(t),F(t,x(t))-F(t,y(t))\rangle
 +\langle e(t),F(t,y(t))-f_\theta(t,y(t))\rangle
 - \\ &\hspace{5.6cm} \langle e(t),\xi_x(t)-\xi_y(t)\rangle. \tag{$*$}
\end{align*}
By Lipschitz continuity and uniform approximation, respectively, the first two terms satisfy
\[
\langle e(t), F(t,x)-F(t,y)\rangle \le \mathscr{L}\|e(t)\|^2,\qquad
\langle e(t), F(t,y)-f_\theta(t,y)\rangle \le \|e(t)\|\,\delta.
\]
For the normal term, using the positive reach inequality \eqref{eq:posreach} we get,
\[
\langle \xi_x(t),y(t)-x(t)\rangle \le \frac{\|\xi_x(t)\|^2}{2\alpha}\|e(t)\|^2,\qquad
\langle \xi_y(t),x(t)-y(t)\rangle \le \frac{\|\xi_y(t)\|^2}{2\alpha}\|e(t)\|^2.
\]
The third term in $(*)$ therefore satisfies, 
\begin{align*}
-\langle e(t),\xi_x(t)-\xi_y(t)\rangle
&=\langle \xi_x(t),y(t)-x(t)\rangle+\langle \xi_y(t),x(t)-y(t)\rangle \\
&\le \frac{\|\xi_x(t)\|^2+\|\xi_y(t)\|^2}{2\alpha}\|e(t)\|^2 \\
&\le \frac{U^2}{\alpha}\|e(t)\|^2.
\end{align*}
Therefore, for a.e.\ $t$,
\[
\frac12\frac{d}{dt}\|e(t)\|^2 \le \left(\mathscr{L}+\frac{U^2}{\alpha}\right)\|e(t)\|^2 + \delta \|e(t)\|.
\]
Using Young's inequality, we have $\delta\|e\|\le \frac12\|e\|^2+\frac12\delta^2$. We obtain
\[
\frac{d}{dt}\|e(t)\|^2 \le 2 \left( \mathscr{L}+\frac{U^2}{\alpha}+\frac{1}{2} \right)\|e(t)\|^2 + \delta^2,
\qquad \|e(0)\|=0.
\] 

Then the differential inequality can be written as
\[
\frac{dg}{dt} \le a\,g(t)+b(t),\qquad g(0)=0.
\]
with $a = 2\mathscr{L}+2\frac{U^2}{\alpha}+1$
By Gr\"onwall's inequality, we obtain
\[
g(t)\le \delta^2 t+\int_0^t a\,(\delta^2 s)\exp\!\left(\int_s^t a\,du\right)\,ds.
\]
Since $\int_s^t a\,du=a(t-s)$, this becomes
\[
g(t)\le \delta^2 t+a\delta^2\int_0^t s\,e^{a(t-s)}\,ds.
\]
Moreover,
\[
\int_0^t s\,e^{a(t-s)}\,ds=-\frac{t}{a}+\frac{e^{at}-1}{a^2}.
\]
Plugging this back in gives
\[
g(t)\le \delta^2 t+a\delta^2\left(-\frac{t}{a}+\frac{e^{at}-1}{a^2}\right),
\]
\[
g(t)\le \delta^2 t-\delta^2 t+\frac{\delta^2}{a}\left(e^{at}-1\right),
\]
\[
g(t)\le \frac{\delta^2}{a}\left(e^{at}-1\right).
\]
Therefore, at $t=T$,
\[
\|e(T)\|^2=g(T)\le \frac{\delta^2}{a}\left(e^{aT}-1\right)
=\frac{\delta^2}{2\left(\mathscr{L}+\frac{U^2}{\alpha} +\frac{1}{2}\right)}
\, \exp{\left(2T\left(\mathscr{L}+ \frac{U^2}{\alpha} +\frac{1}{2} \right) -1 \right)} . \qedhere
\]
\end{proof}

\subsection{Proof of Theorem \ref{thm:ProjApprox}}
\label{sec:ProjApprox}

\begin{proof}
Fix \(x \in \Omega\) and set \(z := f_\theta(x)\). By assumption  \(P_M(z)\) is well-defined. Since \(F(x) \in M\), by the the fact that $P_M$ is the projection,
\[
\|z - P_M(z)\| = \operatorname{dist}(z,M) \le \|z - F(x)\|.
\]
Using the hypothesis that \(\|z - F(x)\| \le \varepsilon\), we obtain
\[
\|f_\theta(x) - P_M(f_\theta(x))\| \le \varepsilon.
\]
Finally, the triangle inequality gives
\[
\|F(x) - P_M(f_\theta(x))\|
\le \|F(x) - f_\theta(x)\| + \|f_\theta(x) - P_M(f_\theta(x))\|
\le \varepsilon + \varepsilon
= 2\varepsilon.
\]
Since \(x \in \Omega\) was arbitrary, the bound holds for all \(x \in \Omega\). \(\square\)

\end{proof}

\subsection{Proof of Theorem \ref{thm:ProjApproxL2}}
\label{sec:ProjApproxL2}
\begin{proof}
Define
\[
\mathcal A_M
:=
\left\{
z\in\mathbb R^d :
P_M(z) \text{ is not uniquely defined}
\right\}
\]
Since $M$ is closed, the set $\mathcal A_M$ has Lebesgue measure zero \cite{erdos1945some}.

Fix $\varepsilon>0$ and define
\[
E
:=
\left\{
(x,v)\in \Omega\times B_\varepsilon(0):
f_\theta(x)+v\in \mathcal A_M
\right\}.
\]
For each fixed $x\in\Omega$, consider the sections
\[
E_x
=
\left\{
v\in B_\varepsilon(0):
f_\theta(x)+v\in \mathcal A_M
\right\}
=
\bigl(\mathcal A_M-f_\theta(x)\bigr)\cap B_\varepsilon(0).
\]
Since translations preserve Lebesgue measure and $\mathcal A_M$ has Lebesgue
measure zero, we have
\[
\mathcal L^d(E_x)=0
\qquad
\text{for every } x\in\Omega.
\]
where $\mathcal L^d$ is the $d$-dimensional Lebesgue measure.
Therefore, by Tonelli's theorem,
\[
(\mu\times\mathcal L^d)(E)
=
\int_\Omega \mathcal L^d(E_x)\,d\mu(x)
=
0.
\]
Applying Fubini's theorem now gives
\[
0
=
(\mu\times\mathcal L^d)(E)
=
\int_{B_\varepsilon(0)}
\mu\bigl(E^v\bigr)\,dv,
\]
where
\[
E^v
:=
\{x\in\Omega : f_\theta(x)+v\in\mathcal A_M\}.
\]
Hence, for Lebesgue almost every $v\in B_\varepsilon(0)$,
\[
\mu(E^v)=0.
\]
Equivalently, for Lebesgue almost every $v\in B_\varepsilon(0)$, the map
\[
x\mapsto P_M(f_\theta(x)+v)
\]
is uniquely defined for $\mu$-almost every $x\in\Omega$.
Therefore, if $f_{\theta}$ is a reference function approximating the target
function $F$ up to the desired accuracy, then for Lebesgue almost every
sufficiently small perturbation $v\in B_\varepsilon(0)$, the perturbed projected
map
\[
    \tilde f_{\theta,v}(x)
    :=
    P_M(f_\theta(x)+v)
\]
is well defined for almost every $x\in\Omega$. Moreover, since $v$ can be chosen
arbitrarily small, this perturbation does not substantially change the
approximation error. Indeed, whenever $F(x)\in M$ and the projection is uniquely
defined,
\[
\|F(x)-P_M(f_\theta(x)+v)\|
\le
2\|F(x)-f_\theta(x)-v\|.
\]
Consequently,
\[
\|F-\tilde f_{\theta,v}\|_{L^2_\mu(\Omega)}
\le
2\|F-f_\theta\|_{L^2_\mu(\Omega)}
+
2\|v\|\,\mu(\Omega)^{1/2}.
\]
Thus, by first choosing $f_\theta$ to approximate $F$ well and then choosing
$\|v\|$ sufficiently small, the projected
perturbed map $\tilde f_{\theta,v}$ is both almost everywhere well defined and
arbitrarily accurate in $L^2_\mu(\Omega)$.
\end{proof}

\subsection{Proof of Theorem \ref{thm:exp_exact}}
\label{sec:proof_exp_exact}

\ExpExact*
\begin{proof}
Since $\Omega \subset \mathbb{R}^d$ is compact and $F:\Omega \to M$ is continuous, the image $F(\Omega) \subset M$ is compact. The distance function $q \mapsto \operatorname{dist}_M(p,q)$ is continuous on $M$, hence the composition $x \mapsto \operatorname{dist}_M\left(p,F(x) \right)$ is continuous on $\Omega$. By compactness of $\Omega$, this function attains its maximum, so there exists $R := \max_{x \in \Omega} \operatorname{dist}_M \left(p,F(x)\right) < \infty$.
For each $x \in \Omega$, define the set of minimizing initial velocities
\[
\Phi_{\min}(x)
:=
\left\{
v \in T_p M \;\middle|\;
\exp_p(v)=F(x)
\ \text{and}\
\|v\|=\operatorname{dist}_M\left(p,F(x)\right)
\right\}.
\]
By the \textit{Hopf-Rinow} theorem \citep[Corollary 6.15]{lee2006riemannian}, there exists a minimizing geodesic from $p$ to $F(x)$, hence $\Phi_{\min}(x) \neq \varnothing$ for all $x \in \Omega$.
Moreover, for every $v \in \Phi_{\min}(x)$ we have $\|v\| \le R$, so $\Phi_{\min}(x) \subset \overline{B(0,R)} \subset T_p M$.
The closed ball $\overline{B(0,R)}$ is compact.  
The set $\Phi_{\min}(x)$ is closed as it is the intersection of the closed sets
\[
\exp_p^{-1}(\{F(x)\})
\quad \text{and} \quad
\{v \in T_p M : \|v\|=\operatorname{dist}_M(p,F(x))\}.
\]
Hence $\Phi_{\min}(x)$ is compact for every $x \in \Omega$.

Define the graph
\[
\Gamma_{\min} :=
\left\{
(x,v) \in \Omega \times T_p M \;\middle|\; v \in \Phi_{\min}(x)
\right\}.
\]
Using continuity of $F$, continuity of $\exp_p$, and continuity of the function
$x \mapsto \operatorname{dist}_M(p,F(x))$, it follows that $\Gamma_{\min}$ is closed in
$\Omega \times T_p M$, hence Borel measurable.

Therefore, the set-valued function $\Phi_{\min}:\Omega \to 2^{T_p M}$, where $2^{T_p M}$ is the power set of all subsets of $T_p M$, is measurable, nonempty, and compact-valued. By the measurable selection theorem, \citep[Theorem~8.1.3]{Bog07}, there exists a measurable function $f:\Omega \to T_p M$ such that $f(x) \in \Phi_{\min}(x)$ for all $x \in \Omega$. In particular,
\[
\exp_p(f(x)) = F(x)
\quad \text{and} \quad
\|f(x)\| \le R
\quad \text{for all } x \in \Omega.
\]
Consequently, $f(\Omega) \subset \overline{B(0,R)}$, which is compact. This yields a representation of $F$ using the exponential map and the function $f$ with range in $T_pM$.

Since $\exp_p$ is Lipschitz on the compact set $K$, we have
\[
\operatorname{dist}_M\left(\exp_p(f(x)),\exp_p(f_\theta(x))\right)\le \mathscr{L}\|f(x)-f_\theta(x)\|.
\]
Integrating and using $\|f_\theta - f\|_{L^2_{\mu}}<\delta$ yields the stated bound.
\end{proof}

\subsection{Proof of Theorem \ref{thm:log_grad_projection}}
\label{sec:log_grad_projection}

\loggradprojection*

\begin{proof}
Fix a compact set $K \subset \{x\in\R^d : \dist(x,M)<\alpha\}$. For each $x\in K$, let
\[
y^* = P_M(x), \qquad n = x - y^*, \qquad \mathsf{d} = \|n\|.
\]
We will reserve the symbol $C$ for a generic constant, that is independent of $x\in K$.
Recall
\[
u_t(x) = (2\pi t)^{-d/2}\int_M e^{-\|x-y\|^2/(2t)}\,d\mu(y),
\]
and define
\begin{equation}\label{eq:Dt-At-def}
D_t(x) = \int_M e^{-\|x-y\|^2/(2t)}\,d\mu(y),
\quad
A_t(x) = \int_M (x-y)e^{-\|x-y\|^2/(2t)}\,d\mu(y).
\end{equation}
Then
\[
\nabla_x u_t(x) = -\frac{1}{t}A_t(x),\qquad
\nabla_x\log u_t(x) = -\frac{1}{t}\,\frac{A_t(x)}{D_t(x)}.
\]
Thus it suffices to prove
\begin{equation}
\label{eq:goal-ratio}
\frac{A_t(x)}{D_t(x)} = n + O_K(t^{1/2})
\quad\text{uniformly for }x\in K,\ t\downarrow 0.
\end{equation}

\paragraph{Local graph representation and local geometry near $y^*$.}
Choose an orthogonal splitting $\R^d = T_{y^*}M \oplus N_{y^*}M$
with $T_{y^*}M \simeq \R^m$. It is well-known (for instance, by combining \textit{implicit function theorem} and Proposition~5.16 of \cite{lee2003smooth}) that there exist $r_0>0$ and a map $y: B(0,r_0)\subset T_{y^*}M \to M$ defined by
\begin{equation} \label{eq:y}
    y(s) = y^* + s + h(s),
\end{equation}
where $ h : B(0,r_0) \subset T_{y^*}M \longrightarrow N_{y^*}M $ is $C^{\infty}$ map satisfying
\[
    h(s)=\tfrac12 D^2 h(0)[s,s] + R_3(s), \quad \|R_3(s)\| \le C \,\|s\|^3, 
\]
with $h(0)=0$, $Dh(0)=0$.
The bilinear map
\[
    D^2 h(0) : T_{y^*}M \times T_{y^*}M \to N_{y^*}M
\]
is the Hessian of $h$ at $0$, and in these coordinates it coincides with the \textit{second fundamental form} of $M$ at $y^*$:
\[
    \mathrm{II}_{y^*}(s_1,s_2) = D^2 h(0)[s_1,s_2], \quad s_1,s_2 \in T_{y^*}M.
\]
It satisfies $\|D^2 h(0)\| \le C$, where $C$ depends continuously on $y^*$. Since $M$ has positive reach $\alpha>0$, the projection map $P_M$ is continuous on $K$. Hence $P_M(K) \subset M$ is compact. Therefore, all the constants $r_0$ and $C$s can be chosen uniformly for all $y^* \in P_M(K)$.

\paragraph{Surface measure in local coordinates}
The Riemannian surface measure $\mu$ on $M$ induces a pullback measure on
$T_{y^*}M$ via the parametrization $y$ \eqref{eq:y}. For any measurable set
$A\subseteq B(0,r_0)$,
\[
    \mu(y(A))=\int_{A} J(s)\,ds,
    \qquad
    J(s)=\sqrt{\det g(s)},
\]
where $g$ is the induced Riemannian metric on $T_{y^*}M$. According to of \citep[Chapter~3]{lee2006riemannian}, it is defined through the parametrization $y(s)$ as 
\[
g_s(u,v) := \big\langle Dy(s)\,u,\; Dy(s)\,v \big\rangle_{\mathbb{R}^d},
\qquad
u,v \in T_s U \simeq \mathbb{R}^m.
\]
In the present $y$ coordinates \eqref{eq:y} , one has
\[
    \partial_i y(s) = e_i + \partial_i h(s),
\]
where $\{e_1,\dots,e_m\}$ is an orthonormal basis of $T_{y^*}M$. 
Since $h\in C^\infty$ and $Dh(0)=0$, the map $Dh \in C^1$, and hence $\|Dh(s)\| \le C \|s\|$ for some $C$ and $\|s\|$ small, i.e.,
\[
    \|\partial_i h(s)\| = O(\|s\|),
    \qquad
    \|\partial_i h(s)\|\;\|\partial_j h(s)\| = O(\|s\|^2),
\]
and hence
\[
    g_{ij}(s)
    = \delta_{ij} + \langle \partial_i h(s),\,\partial_j h(s)\rangle
    = \delta_{ij} + O(\|s\|^2).
\]
Since $\det g(s)$ is a smooth function of the entries of $g(s)$ and 
$g(s)\to I_m$ as $s\to 0$, we obtain the uniform expansion
\begin{equation}
\label{eq:J-expansion}
    J(s)=\sqrt{\det g(s)} = 1 + J_2(s), \quad J_2(s) := O(\|s\|^2).
\end{equation}
And the surface measure satisfies $d\mu(y(s)) = J(s)\,ds$.

\paragraph{Second-order geometry: shape operator.}
In the local graph chart $y$, the second fundamental form of $M$ at $y^*$ is encoded by the quadratic term $D^2 h(0)[s,s] \in N_{y^*}M$, $s\in T_{y^*}M$. Fix a normal vector $n\in N_{y^*}M$ and define the associated (weighted) shape operator $S_n : T_{y^*}M \to T_{y^*}M$ by the relation
\[
\langle S_n s,\, s\rangle := \langle n,\, D^2 h(0)[s,s]\rangle,
\qquad s\in T_{y^*}M.
\]
From \textit{Federer's reach theorem} \citep[Proposition A.1.] {aamari:hal-01521955} it follows that 
\[
\|S_n\|_{\operatorname{op}} = \max_i \frac{\|n\|}{{\alpha}}  \le \frac{\mathsf{d}}{\alpha} <1.
\]
Define the linear operator
\[
Q_n := I - S_n.
\]
Since $S_n$ is self-adjoint, we obtain the uniform coercivity estimate for $s\in T_{y^*}M$,
\[
\langle Q_n s,\, s\rangle \ge \left(1-\frac{\mathsf{d}}{\alpha}\right)\|s\|^2 \quad \text{or,~} 
Q_n \succeq  \delta  I. \quad \delta := \left(1-\frac{\mathsf{d}}{\alpha}\right)
\]
We can see that we can take the constants $\mathsf{d}$ and $\delta$ to be uniform. Since $K\subset\{x:\dist(x,M)<\alpha\}$ is compact, we can define $\mathsf{d}  < \sup_{x\in K} \dist(x,M) < \alpha$

\paragraph{Domain decomposition.} We decompose $M$ into two regions:
\[
M = \underbrace{y(B(0,r_0))}_{\mathrm{near}} \;\cup\;
\underbrace{M\setminus y(B(0,r_0))}_{\mathrm{far}}.
\]
Let $D_t^{\mathrm{near}}, D_t^{\mathrm{far}}$ and $A_t^{\mathrm{near}}, A_t^{\mathrm{far}}$ denote the corresponding contributions to $D_t$ and $A_t$ \eqref{eq:Dt-At-def} from these regions.

\paragraph{Far region computation.}
Since $y^*=P_M(x)$ is the unique minimizer of $z\mapsto\|x-z\|$ on $M$ and $y(s)$ is a local graph around $y^*$, there exists $\varepsilon_0>0$ (uniform for $x\in K$) such that
\[
\|x - z\| \,\ge\, \mathsf{d} + \varepsilon_0,
\qquad z \in M \setminus y(B(0,r_0)).
\]
Consequently, for all $t>0$ sufficiently small,
\begin{equation}
\label{eq:F}
|D_t^{\mathrm{far}}(x)| + \|A_t^{\mathrm{far}}(x)\|
\;\le\;
C\exp\left(-\frac{(\mathsf{d} + \varepsilon_0)^2}{2t}\right)
= \exp \left(-\frac{\mathsf{d}^2}{2t} \right) \, O \left( \exp \left(-\frac ct \right) \right),
\end{equation}
where the implicit constant $C$ (that results from the integration) depends only on $K$. Therefore, the far-region term is exponentially smaller than the leading factor $e^{-\mathsf{d}^2/(2t)}$ as $t\downarrow 0$. Hence the effect of the far-region contributions to $D_t$ and $A_t$ will be negligible, as we will see further ahead.

\paragraph{Computing $\|x-y(s)\|^2$ on the chart.} Towards the near-region computation, we first compute $\|x-y(s)\|^2$. On the local graph chart $y(s)$,
\begin{equation}
\label{eq:x-y}
x - y(s) = x -y^* - s - h(s)
       = n - s - \tfrac12 D^2 h(0)[s,s] - R_3(s),
\end{equation}
where $R_3(s)=O(\|s\|^3)$ uniformly over $y^*\in P_M(K)$.
Using the orthogonal decomposition $\R^d = T_{y^*}M \oplus N_{y^*}M$, we note that $s\in T_{y^*}M$, $n \in N_{y^*}M$, and given $h \in N_{y^*}M$, $\tfrac12 D^2h(0)[s,s] + R_3(s)\in N_{y^*}M$. We obtain
\begin{align*}
\|x - y(s)\|^2
&= \left\|n - \tfrac12 D^2 h(0)[s,s] - R_3(s)\right\|^2 + \|s\|^2 \\
&= \|n\|^2 - \left\langle n, D^2 h(0)[s,s]\right\rangle - 2\langle n, R_3(s)\rangle
   + \left\|\tfrac12 D^2 h(0)[s,s] + R_3(s)\right\|^2
   + \|s\|^2. \tag{$*$}
\end{align*}
We evaluate this expression term-by-term. First consider, 
\begin{equation*}
\label{eq:x-y_1}
\|s\|^2 - \langle n, D^2 h(0)[s,s]\rangle = \|s\|^2 - \langle S_n s,s\rangle = \langle Q_n s,s\rangle. 
\end{equation*}
Next consider,
\begin{equation*}
\left\|\tfrac12 D^2 h(0)[s,s] + R_3(s) \right\|^2
= \left \|\tfrac12 D^2 h(0)[s,s] \right\|^2
  + 2\left \langle\tfrac12 D^2 h(0)[s,s], R_3(s)\right\rangle
  + \|R_3(s)\|^2. 
\end{equation*}
Since $\tfrac12 D^2 h(0)[s,s]$ is quadratic in $s$ and $R_3(s)$ is cubic, therefore $\langle\tfrac12 D^2 h(0)[s,s], R_3(s)\rangle = O(\|s\|^5)$. The other two contributions are $O(\|s\|^4)$ and $O(\|s\|^6)$, respectively. Hence,
\begin{equation*}
\left\|\tfrac12 D^2 h(0)[s,s] + R_3(s)\right\|^2 = O(\|s\|^4) 
\end{equation*}
Lastly, since $\|n\|\le \mathsf{d}$ on $K$ and $R_3(s)=O(\|s\|^3)$, we have
\begin{equation*}
\langle n,R_3(s)\rangle = O(\|s\|^3). 
\end{equation*}
Putting it all together, $(*)$ evaluates to,
\begin{equation}
\label{eq:distance-expansion}
\|x - y(s)\|^2 = \mathsf{d}^2 + \langle Q_n s,s\rangle + E(s), \quad
\text{where} ~ E(s) = O(\|s\|^3).
\end{equation}

\paragraph{Gaussian normalization on the chart.}
Introduce the Gaussian normalization
\begin{equation}
\label{eq:NormConst}
Z_{Q_n,t}
:= \int_{\mathbb{R}^m} e^{\left(-\tfrac{1}{2t}\langle Q_n s,s\rangle \right)}\,ds
= (2\pi t)^{m/2}(\det Q_n)^{-1/2},
\end{equation}
and the associated probability measure
\begin{equation}
\label{eq:GaussianMeasure}
d\gamma_{Q_n,t}(s) := \frac{1}{Z_{Q_n,t}}\,e^{-\frac12\langle Q_n s,s\rangle/t}\,ds.
\end{equation}
Let $S\sim\gamma_{Q_n,t}$. The covariance of $\gamma_{Q_n,t}$ is
\begin{equation}
\label{eq:cov}
\mathrm{Cov}(S)=t\,Q_n^{-1}.
\end{equation}
Moreover, for any fixed $k\ge1$,
\begin{equation}
\label{eq:Gaussian-moments}
\E_{\gamma_{Q_n,t}}\big[\|S\|^k\big]\le C_k\,t^{k/2},
\end{equation}
where the constant $C_k$ is uniform for $x\in K$, since $Q_n\succeq\delta I$ uniformly.

\paragraph{Asymptotics of $D_t$}
We first consider $D_t^{\text{near}}$,
\[
D_t^{\text{near}}
= \int_{y(B(0,r_0))} e^{-\|x-z\|^2/(2t)} d\mu(z)
= \int_{B(0,r_0)} e^{-\|x-y(s)\|^2/(2t)} J(s) \, ds
\]
Using \eqref{eq:distance-expansion},
\begin{align*}
D_t^{\text{near}}
= e^{-\mathsf{d}^2/(2t)} \int_{B(0,r_0)}
   e^{\left(-\tfrac{1}{2t}\langle Q_n s,s\rangle \right)} \,
   e^{-E(s)/(2t)} \, (1+J_2(s))\,ds 
\end{align*}

We extend $E(s)$ and $J_2(s)$ to measurable functions on $\R^m$ that agree with the original ones on $B(0,r_0)$ and satisfy the growth bounds
\[
|E(s)|\le C\|s\|^3,\qquad |J_2(s)|\le C\|s\|^2, \qquad \forall s \in \R^m
\]
with $C$ independent of $x \in K$. This guarantees integrability against the Gaussian weight. With these extensions, the integral over $B(0,r_0)$ may be written as
\begin{align}
\begin{aligned}
\label{eq:Dt_near}
D_t^{\text{near}} &= e^{-\mathsf{d}^2/(2t)} \int_{\mathbb{R}^m}
   e^{\left(-\tfrac{1}{2t}\langle Q_n s,s\rangle \right)} \,
   e^{-E(s)/(2t)} \, (1+J_2(s))\,ds \\
&~-e^{-\mathsf{d}^2/(2t)} \int_{\mathbb{R}^m \setminus B(0,r_0)}
   e^{\left(-\tfrac{1}{2t} \langle Q_n s,s\rangle \right)} \,
   e^{-E(s)/(2t)} \, (1+J_2(s))\,ds 
\end{aligned}
\end{align}

We rewrite the first integral above as
\begin{equation}
\label{eq:temp1}
\int_{\mathbb{R}^m} e^{\left(-\tfrac{1}{2t}\langle Q_n s,s\rangle \right)} \, e^{-E(s)/(2t)} \, (1+J_2(s))\,ds 
= Z_{Q_n,t}\, \E_{\gamma_{Q_n,t}} \left[e^{-E(S)/(2t)} \, (1+J_2(S)) \right],
\end{equation}
where $S\sim\gamma_{Q_n,t}$.
Using $ \exp({-E(S)/2t}) = 1 + \frac{1}{2t} O(\|S\|^3)$ and $J_2(S)=O(\|S\|^2)$, 
\begin{equation}
\label{eq:temp2}
e^{-E(S)/2t}(1+J_2(S)) = 1 + \frac{1}{2t} O(\|S\|^3) + O(\|S\|^2) + \frac{1}{2t} O(\|S\|^5).
\end{equation}
where we have neglected the higher-order mixed terms .
Using \eqref{eq:Gaussian-moments}, evaluating the expectation of these terms one-by-one, 
\begin{align*} 
    &\E_{\gamma_{Q_n,t}} \left[\frac{O(\|S\|^3)}{2t} \right] =
    O\left(\frac{1}{2t} t^{3/2}\right) = O(t^{1/2}) \\
    &\E_{\gamma_{Q_n,t}}[O(\|S\|^2)] = O(t) \\
    &\E_{\gamma_{Q_n,t}}\left[\frac{1}{2t} O(\|S\|^5)\right] = \frac{1}{2t}O(t^{5/2}) = O(t^{3/2})
\end{align*}
Putting these together, we have
\begin{equation}
    \E_{\gamma_{Q_n,t}} \left[e^{-E(S)/(2t)} \, (1+J_2(S)) \right] = (1+O(t^{1/2})). \label{eq:temp}
\end{equation}
Hence, the first integral in $D_t^{\text{near}}$ \eqref{eq:Dt_near} evaluates to:
\begin{equation}
\label{eq:Dt_near1}
e^{-\mathsf{d}^2/(2t)} \int_{\mathbb{R}^m}
e^{\left(-\tfrac{1}{2t}\langle Q_n s,s\rangle \right)} \,
e^{-E(s)/(2t)} \, (1+J_2(s))\,ds = Z_{Q_n,t} \, e^{-\mathsf{d}^2/(2t)} (1+O(t^{1/2}))
\end{equation}

Next, we compute the second integral in $D_t^{\text{near}}$ \eqref{eq:Dt_near},
\[
e^{-\mathsf{d}^2/(2t)} \int_{\mathbb{R}^m \setminus B(0,r_0)}
   e^{\left(-\tfrac{1}{2t}\langle Q_n s,s\rangle \right)} \,
   e^{-E(s)/(2t)} \, (1+J_2(s))\,ds.
\] 
Consider the integral only,
\begin{align*}
&\int_{\R^m\setminus B(0,r_0)} e^{\left(-\tfrac{1}{2t}\langle Q_n s,s\rangle \right)} e^{-E(s)/(2t)} (1+J_2(s)) \,ds \\
&\hspace{1cm} = \int_{\R^m\setminus B(0,r_0)} e^{\left(-\tfrac{1}{4t}\langle Q_n s,s\rangle \right)} e^{\left(-\tfrac{1}{4t}\langle Q_n s,s\rangle \right)} e^{-E(s)/(2t)} (1+J_2(s)) \,ds
\end{align*}
Recall that $Q_n \succeq \delta I$ uniformly for $x\in K$, and since $T_{y^*}M \simeq \mathbb{R}^m$, we have $ \langle Q_n s,s \rangle \ge \delta \|s\|^2$ for all $s\in\R^m$. In particular, for all $\|s\|\ge r_0$,
\[
\exp \left(-\tfrac{1}{4t}\langle Q_n s,s\rangle \right) \le
\exp \left(-\tfrac{\delta}{4t}\,\|s\|^2\right) \le \exp\left(-\tfrac{\delta}{4t}\,r_0^2\right).
\]
Continuing the integral computation from above,
\begin{align*}
&\int_{\R^m\setminus B(0,r_0)} e^{\left(-\tfrac{1}{2t}\langle Q_n s,s\rangle \right)} e^{-E(s)/(2t)} (1+J_2(s)) \,ds \\
& \hspace{2cm} \le e^{-\frac{\delta r_0^2}{4t}} \int_{\R^m} e^{\left(-\tfrac{1}{4t}\langle Q_n s,s\rangle \right)} e^{-E(s)/(2t)} (1+J_2(s)) \,ds\\
&\hspace{2cm} = 2^{m/2} Z_{Q_n,t} e^{-\frac{\delta r_0^2}{4t}} \left(1+O(t^{1/2})\right), ~~\text{(using \eqref{eq:temp1}, \eqref{eq:temp})}
\end{align*}
where the constant $2^{m/2}$ is due to $Z_{Q_n,2t} = 2^{m/2}Z_{Q_n,t}$. In summary, the second integral in $D_t^{\text{near}}$ in \eqref{eq:Dt_near} can be bounded as
\begin{align}
\label{eq:Dt_near2}
\begin{aligned}
& e^{-\mathsf{d}^2/(2t)} \int_{\mathbb{R}^m \setminus B(0,r_0)}
e^{\left(-\tfrac{1}{2t}\langle Q_n s,s\rangle \right)} \, e^{-E(s)/(2t)} \, (1+J_2(s))\,ds \\
& \hspace{2cm} \le \, 2^{m/2} Z_{Q_n,t} \, e^{-\mathsf{d}^2/(2t)} \, e^{-\delta r_0^2/(4t)} \, \left(1+O(t^{1/2})\right).
\end{aligned}
\end{align}

Lastly, we recall that $D_t^{\text{far}}$, \eqref{eq:F}, satisfies
\[ 
|D_t^{\text{far}}| \le \exp \left(-\frac{\mathsf{d}^2}{2t} \right) \, O \left( \exp \left(-\frac ct \right) \right).
\]
Putting together \eqref{eq:Dt_near1}, the tail bound \eqref{eq:Dt_near2}, and the far-region estimate \eqref{eq:F}, we obtain
\begin{align*}
D_t(x)
= Z_{Q_n,t} \, e^{-\mathsf d^2/(2t)}\left(1+O(t^{1/2})\right)
+ 2^{m/2} Z_{Q_n,t} \, e^{-\mathsf{d}^2/(2t)} \, e^{-\delta r_0^2/(4t)} 
+ e^{-\mathsf d^2/(2t)}\,O\!\left(e^{-c/t}\right),
\end{align*}
uniformly for $x\in K$ and $t\downarrow 0$.
Absorbing the latter two exponentially small remainder into the error term yields
\begin{align}
\label{eq:Dt}
D_t(x) = e^{-\mathsf d^2/(2t)} (2\pi t)^{m/2}(\det Q_n)^{-1/2}
\left(1 + O\left(t^{1/2}\right)\right).
\end{align}

\paragraph{Asymptotics of $A_t(x)$}
Similar to $D_t^{\text{near}}$, we first consider $A_t^{\text{near}}$,
\[
A_t^{\text{near}}
= \int_{y(B(0,r_0))} (x-z)\,e^{-\|x-z\|^2/(2t)} d\mu(z)
= \int_{B(0,r_0)} (x-y(s)) \, e^{-\|x-y(s)\|^2/(2t)} J(s) \, ds
\]
Using \eqref{eq:distance-expansion},
\begin{align*}
A_t^{\text{near}} = e^{-\mathsf{d}^2/(2t)} \int_{B(0,r_0)}
   (x-y(s)) \, e^{\left(-\tfrac{1}{2t}\langle Q_n s,s\rangle \right)} \,
   e^{-E(s)/(2t)} \, (1+J_2(s))\,ds 
\end{align*}
Similar to $D_t^{near}$, by extending $E(s)$ and $J_2(s)$ to measurable functions on $\R^m$, we split the domain of integration. 
\begin{align}
\begin{aligned}
\label{eq:At_near}
A_t^{\text{near}} &= e^{-\mathsf{d}^2/(2t)} \int_{\mathbb{R}^m}
   (x-y(s)) \, e^{\left(-\tfrac{1}{2t}\langle Q_n s,s\rangle \right)} \,
   e^{-E(s)/(2t)} \, (1+J_2(s))\,ds \\
&~-e^{-\mathsf{d}^2/(2t)} \int_{\mathbb{R}^m \setminus B(0,r_0)}
   (x-y(s)) \, e^{\left(-\tfrac{1}{2t} \langle Q_n s,s\rangle \right)} \,
   e^{-E(s)/(2t)} \, (1+J_2(s))\,ds.
\end{aligned}
\end{align}
Rewrite \eqref{eq:x-y} as
\[
x-y(s) = n - s - \tfrac12 D^2 h(0)[s,s] - R_3(s) = n+G(s)
\]
where $G(s):= -s - \tfrac12 D^2 h(0)[s,s] - R_3(s)$. Consider the first integral in \eqref{eq:At_near},
\begin{align*}
    e^{-\mathsf{d}^2/(2t)} \int_{\mathbb{R}^m}
   (x-y(s)) \, e^{\left(-\tfrac{1}{2t}\langle Q_n s,s\rangle \right)} \,
   e^{-R(s)/(2t)} \, (1+J_2(s))\,ds \\
   = Z_{Q_n,t} \, \E_{\gamma_{Q_n,t}} \left[
      (n+G(S)) \, e^{-E(s)/(2t)}(1+J_2(S)) \right]
\end{align*}
where $S\sim\gamma_{Q_n,t}$. 

Computing the expectation,
\begin{align*}
\E [(n+G(S)) &e^{-E(s)/(2t)}(1+J_2(S))] \\
&= \E \left[ \left(n - s - \tfrac12 D^2 h(0)[s,s] - R_3(s) \right) \left(1 + \frac{1}{2t} O(\|S\|^3) + O(\|S\|^2) \right) \right] \\
&= \E \left[ n \left(1 + \frac{1}{2t} O(\|S\|^3) + O(\|S\|^2) \right) \right] 
-\E \left[s \left(1 + \frac{1}{2t} O(\|S\|^3) + O(\|S\|^2) \right) \right] \\
&- \tfrac12 \E \left[ D^2 h(0)[s,s] \left(1 + \frac{1}{2t} O(\|S\|^3) + O(\|S\|^2) \right) \right]
\end{align*}
From \eqref{eq:temp}, the first term evaluates to $n(1+O(t^{1/2}))$. The second term is bounded by
\begin{align*}
\E_{\gamma_{Q_n,t}} &\left[s \left(1 + \frac{1}{2t} O(\|S\|^3) + O(\|S\|^2) \right)\right] \\
&\le \E_{\gamma_{Q_n,t}}[S] + \frac{1}{2t} \E_{\gamma_{Q_n,t}}[ O(\|S\|^4) + \E_{\gamma_{Q_n,t}}(O(\|S\|^3)) = O(t)
\end{align*}
where we have used the fact that $\gamma_{Q_n,t}$ is centered, hence $\E_{\gamma_{Q_n,t}}[S]=0$. 
Similarly, we bound the third term using the fact that $\|D^2 h(0)\| \le C$, and hence 
\begin{align*}
 \tfrac12 \E_{\gamma_{Q_n,t}} &\left[ D^2 h(0)[s,s] \left(1 + \frac{1}{2t} O(\|S\|^3) + O(\|S\|^2) \right) \right] \\
 & \le \tfrac12 \|D^2h(0)\| \, \E_{\gamma_{Q_n,t}} (\|S\|^2 + \frac{1}{2t} O(\|S\|^5) + O(\|S\|^4) \le O(t)
\end{align*}
Therefore, we obtain
\begin{align}
\label{eq:At_near1}
Z_{Q_n,t} \, \E_{\gamma_{Q_n,t}} \left[
      (n+G(S)) \, e^{-E(s)/(2t)}(1+J_2(S)) \right] = Z_{Q_n,t}(n + O(t^{1/2})). 
\end{align}

Consider the second integral in \eqref{eq:At_near}. Similar to \eqref{eq:Dt_near2}, this term can be bounded, using the expression \eqref{eq:x-y} as 
\begin{align}
\label{eq:At_near2}
\begin{aligned}
-e^{-d^2/(2t)} \int_{\mathbb{R}^m \setminus B(0,r_0)}
   (x-y(s)) \, e^{\left(-\tfrac{1}{2t} \langle Q_n s,s\rangle \right)} \,
   e^{-E(s)/(2t)} \, (1+J_2(s))\,ds.  \\
   \leq 2^{m/2} Z_{Q_n,t} \, e^{-\mathsf{d}^2/(2t)} \, e^{-\delta r_0^2/(4t)} \, \left(1+O(t^{1/2})\right)
\end{aligned}
\end{align}

Finally, we note that $ A_t^{\text{far}}$, \eqref{eq:F}, satisfies
\[ 
\|A_t^{\text{far}}\| \le \exp \left(-\frac{\mathsf{d}^2}{2t} \right) \, O \left( \exp \left(-\frac ct \right) \right).
\]
Putting together \eqref{eq:At_near1}, \eqref{eq:At_near2}, and \eqref{eq:F} to conclude 
\begin{align*}
A_t(x) &= A_t^{\text{near}} + A_t^{\text{far}} \nonumber \\
&= Z_{Q_n,t} \, e^{-\mathsf{d}^2/(2t)} \,  \left(n+O(t^{1/2})\right) + 2^{m/2} Z_{Q_n,t} \, e^{-\mathsf{d}^2/(2t)} \, e^{-\delta r_0^2/(4t)} + e^{-\mathsf{d}^2/2t} \, O \left( e^{-\frac ct} \right).
\end{align*}
Again, keeping only the lower order terms we get, 
\begin{equation}
\label{eq:At}
A_t(x)
= e^{-\mathsf{d}^2/(2t)} (2\pi t)^{m/2}(\det Q_n)^{-1/2}
\left(n + O(t^{1/2})\right),
\end{equation}
uniformly for $x\in K$ as $t\downarrow 0$.

\paragraph{Ratio \(A_t(x)/D_t(x)\) and conclusion.}

Equations \eqref{eq:Dt} and \eqref{eq:At} hold uniformly for $x\in K$ as $t\downarrow 0$. Computing $\frac{A_t(x)}{D_t(x)}$ yields,
\[
\frac{A_t(x)}{D_t(x)}
=
\frac{n + O\left(t^{1/2}\right)}{1 + O\left(t^{1/2}\right)}
= n + O\left(t^{1/2}\right).
\]
Recalling that $n=x-y^*=x-P_M(x)$, we conclude
\[
\nabla_x\log u_t(x)
= -\frac{1}{t}\frac{A_t(x)}{D_t(x)}
= -\frac{x-P_M(x)}{t} + O\left(t^{-1/2}\right),
\]
Rearranging, this is equivalent to
\[
x + t\,\nabla_x\log u_t(x)
= x - \frac{A_t(x)}{D_t(x)}
= x - n + O\left(t^{1/2}\right)
= P_M(x) + O\left(t^{1/2}\right),
\]
that is,
\[
\sup_{x\in K}
\left\|x + t\,\nabla_x\log u_t(x) - P_M(x)\right\|
\;\le\; C_K\,t^{1/2}
\]
for all sufficiently small $t>0$.
This establishes the convergence uniformly on $K$ as $t\downarrow 0$,
\[
x + t\,\nabla_x\log u_t(x) \longrightarrow P_M(x)
\quad\text{with rate }O\left(t^{1/2}\right). \qedhere
\]
\end{proof}

\section{Computation for Lie Group Updates}
\label{sec:liegroup}

Let $G$ be a $n$-dimensional matrix Lie group embedded in $\mathbb{R}^{d \times d}$. The tangent space of $G$ at $g \in G$, $T_g G$, can be characterized via the tangent space of $G$ at the identity element $e \in G$, $T_e G$, which is canonically identified with the Lie algebra $\mathfrak{g}$ of $G$. Let $R_g : G \rightarrow G$, $R_g(h) = h g$, be the right translation map. Its differential at the identity, \((dR_g)_e : T_e G \rightarrow T_g G,\)
provides the identification
\[
T_g G = (dR_g)_e(\mathfrak{g}).
\]
For matrix Lie groups, this differential is explicitly given by
\[
(dR_g)_e X = X g, \qquad X \in \mathfrak{g}.
\]
Let $\{E_1,\dots,E_n\}$ be a basis of $\mathfrak{g}$ ($=T_e G$). Then $\{E_1 g,\dots,E_n g\}$ forms a basis of $T_g G$. Given any matrix $A \in \mathbb{R}^{d \times d}$, the orthogonal projection onto $T_g G$ can be written as
\begin{equation}
\label{eq:lie-proj}
P_{T_g G}(A) = \sum_{i=1}^n a^g_i(A)\, E_i g,
\end{equation}
where the $i$\textsuperscript{th} coefficient $a^g_i(A) \in \mathbb{R}$ depend on both $A$ and $g$. Equivalently, the projection induces a coordinate map
\[
\mathbb{R}^{d \times d} \ni A
\;\longmapsto\;
\left(a^g_1(A),\dots,a^g_n(A)\right)^\top \in \mathbb{R}^n .
\]
The Riemannian exponential map at $g \in G$ admits the closed-form expression
\[
\exp_g(X) = \exp_e(X g^{-1})\, g,
\quad X \in T_g G,
\]
where $\exp_e : \mathfrak{g} \rightarrow G$ denotes the exponential map at the identity, which coincides with the matrix exponential for matrix Lie groups.
Applying this formula to the projected direction \eqref{eq:lie-proj} yields
\[
\exp_g\!\left(P_{T_g G}(A)\right)
= \exp_e\!\left(\sum_{i=1}^n a^g_i(A)\, E_i\right) g .
\]
We may therefore express the exponential IAA update for a projected neural ODE on $G$ as
\begin{equation}
\label{eq:lie-update-raw}
g^{\ell+1}
= \exp_e\!\left(
\Delta t \sum_{i=1}^n
a^{g^{\ell}}_i\!\left(f_\theta(\ell, g^\ell)\right)\, E_i
\right) g^\ell ,
\end{equation}
where $f_\theta$ is a neural network producing an ambient matrix-valued update.

Rather than explicitly computing the projection coefficients $a^g_i(\cdot)$, we introduce a neural network $\hat{f}_\theta$ with range $\mathbb{R}^n$
that directly predicts the Lie algebra coordinates. The update then takes the simplified form
\begin{equation}
g^{\ell+1}
=\exp_e\!\left(
\Delta t \sum_{i=1}^n \left(\hat{f}_\theta(\ell,g^\ell )\right)_i \, E_i \right) g^\ell,
\end{equation}
where $(\cdot)_i$ denotes the $i^{\text{th}}$ coordinate. Here, $\hat{f}_\theta$ is trained to approximate the coordinate map
\[
g \;\mapsto\; \left(a^g_1(f_\theta(g)), \ldots, a^g_n(f_\theta(g))\right).
\]

\section{Example Details}
\label{app:examples}
\subsection{Example 1: The Special Orthogonal Group \texorpdfstring{$SO(3)$}{SO(3)}}
Its tangent space at the identity $I$ is isomorphic to its Lie algebra $\mathfrak{so}(\cdot)$.
\[
    T_I \mathrm{SO}(d) = \mathfrak{so}(d) = \{ A\in\mathbb R^{d\times d}: A^\top = -A\}.
\]
For $d = 3$, a convenient basis for $ \mathfrak{so}(3)$ is:
\[
   E_1 =  \begin{bmatrix}
    0 & 0 & 0\\
    0 & 0 & -1\\
    0 & 1 & 0
    \end{bmatrix},~
    E_2 = \begin{bmatrix}
    0 & 0 & 1\\
    0 & 0 & 0\\
    -1 & 0 & 0
    \end{bmatrix},~
    E_3 = \begin{bmatrix}
    0 & -1 & 0\\
    1 & 0 & 0\\
    0 & 0 & 0
    \end{bmatrix}.
\]
To generate trajectories on $\mathrm{SO}(3)$, we integrate a matrix ODE whose velocity is a (state-dependent) scalar multiple of a left-invariant vector field. We define
\[
    g(X) \;=\; \sum_{i=1}^3 E_i X,
    \qquad
    s(X) \;=\; \mathrm{tr}(X^2 + I),
\]
and evolve
\[
    \frac{dX}{dt} \;=\; s(X)\, g(X)
    \;=\; \mathrm{tr}(X^2 + I)\left(\sum_{i=1}^3 E_i X\right).
\]
Since each $E_i$ is skew-symmetric, $\sum_i E_i \in \mathfrak{so}(3)$ and thus $g(X) \in T_X\mathrm{SO}(3)=\{AX: A^\top=-A\}$; multiplying by the scalar $s(X)$ preserves tangency. Therefore, starting from $X(0)\in\mathrm{SO}(3)$, the flow remains on $\mathrm{SO}(3)$ (up to numerical error).

\paragraph{Exponential} If a network produces coefficients $ c = (c_1,c_2,c_3)\in\mathbb R^3 $, define
$ \xi = \sum_{i=1}^3 c_i E_i \in \mathfrak{so}(3) $. A group update with step $ \Delta t $ is
\[
    g(n+1) = \exp(\Delta t\,\xi)\, g(n), \qquad g\in \mathrm{SO}(3).
\]

\paragraph{Projection} Given $Y\in\mathbb R^{d\times d}$, the projection of $Y$ in the Frobenius norm to the closest rotation matrix is the (proper) polar/SVD projection:
\[
    Y=U\Sigma V^\top,\qquad
    P_{\mathrm{SO}(d)}(Y) = \det(Y)\, UV^\top.
\]
This is not an exact projection everywhere in space, but only when close to the manifold.

\subsection{Example 2: The Sphere} The sphere is set of all points in $\mathbb{R}^d$ with unit $\ell_2$ norm. We generate synthetic trajectories on the unit sphere by advecting points under a smooth, time-independent tangent vector field.
Let $p(t)\in\mathbb R^3$ with $\|p(t)\|=1$ and write spherical angles $(\theta(p),\phi(p))$ via the standard parametrization
$r(\theta,\phi)=(\sin\theta\cos\phi,\sin\theta\sin\phi,\cos\theta)$.
On a $(\theta,\phi)$ grid we construct a smooth random field by sampling scalar coefficient fields $a(\theta,\phi),b(\theta,\phi)$ (low-frequency trigonometric mixtures) and setting
\[
    V(\theta,\phi)=a(\theta,\phi)\,\partial_\theta r(\theta,\phi)+b(\theta,\phi)\,\partial_\phi r(\theta,\phi),
    \qquad
    \widehat V(\theta,\phi)=\frac{V(\theta,\phi)}{\|V(\theta,\phi)\|},
\]
then storing its Cartesian components $(U,V,W)$ on the grid (with periodicity in $\phi$).
Given $p\in\mathbb S^2$, we define the ambient velocity by bilinear interpolation
$\widetilde V(p)=\mathrm{Interp}_{(\theta,\phi)}(U,V,W)\in\mathbb R^3$
and project to the tangent space
\[
    \dot p \;=\; \Pi_{T_p\mathbb S^2}\,\widetilde V(p),
    \qquad
    \Pi_{T_p\mathbb S^2}(v)=v-(v^\top p)\,p .
\]
We integrate this ODE using an explicit midpoint (RK2) step in $\mathbb R^3$ together with the sphere retraction
$R(x)=x/\|x\|$ at the midpoint and at the end of the step:
\[
    p_{\mathrm{mid}}=R\!\left(p+\tfrac{\Delta t}{2}\,\Pi_{T_p\mathbb S^2}\widetilde V(p)\right),
    \qquad
    p^{+}=R\!\left(p+\Delta t\,\Pi_{T_{p_{\mathrm{mid}}}\mathbb S^2}\widetilde V(p_{\mathrm{mid}})\right).
\]
Training pairs $(x_0,x_T)$ are obtained by sampling $x_0\sim \mathrm{Unif}(\mathbb S^2)$ and evolving for $T$ steps.

\paragraph{Exponential} The tangent space at $ x $ is defined as
\[
    T_x \mathbb{S}^{d} \;=\; \{\, v \in \mathbb{R}^{d+1} : \langle x, v \rangle = 0 \,\}.
\]
The orthogonal projection onto $ T_x \mathbb{S}^{d} $ is given by
\[
    P_x(u) \;=\; u - \langle u, x \rangle\, x.
\]
The exponential map at $x$ is defined as
\[
    \exp_x(v) \;=\;
    \begin{cases}
    \cos(\|v\|_2)\, x \;+\; \sin(\|v\|_2)\, \dfrac{v}{\|v\|_2}, & v \neq 0 \\
    x, & v = 0
    \end{cases},
\]
$v \in T_x \mathbb{S}^{d}$.
Accordingly, the discrete-time update is 
\begin{align*}
    &u(n) = f_\theta(x(n)), \\
    &v(n) = P_{x(n)}(u(n)) = u(n) - \langle u(n), x(n) \rangle x(n), \\
    &x(n+1) = \exp_{x(n)}(\Delta t\, v(n)).
\end{align*}

\paragraph{Projection} Projection onto $S^d$ of an unconstrained $y\in\mathbb R^{d+1}$ is
\[
    P_{\mathbb{S}^d}(y)=\frac{y}{\|y\|_2}\quad (y\neq 0).
\]

\subsection{Example 3: Manifold with boundary, closed unit disk}
For $x \in \mathbb{R}^2$, we let the vector field be
\[
    F(x_1, x_2) =
     \underbrace{\begin{bmatrix}
    0 & -1 \\
    1 & 0
    \end{bmatrix}}_{J}
    \begin{bmatrix}
    x_1 \\
    x_2
    \end{bmatrix}
    + \alpha
    \begin{bmatrix}
    x_1 \\
    x_2
    \end{bmatrix}
    =
    \begin{bmatrix}
    -x_2 + \alpha x_1 \\
    x_1 + \alpha x_2
    \end{bmatrix},
\]
where $\alpha>0$. Since the set $\|x\| = 1$ is invariant for the dynamics when $\alpha = 0$, the projected dynamical system in this case can be summarized as
\[
    \dot{x} =
    \begin{cases}
    Jx + \alpha x, & \|x\| < 1, \\
    Jx, & \|x\| = 1.
    \end{cases}
\]

\paragraph{Exponential} We do not have an exponential map for this manifold. 

\paragraph{Projection} For points outside the disk we project onto the boundary by normalizing. 

\subsection{Example 4: Cucker-Smale Dynamics on SO(3)}
The data are generated from a geometric Cucker--Smale model describing collective alignment of rigid-body orientations \cite{fetecau2022emergent}.
Let $R_i(t)\in SO(3)$ denotes the orientation of agent $i$, and $A_i(t)\in\mathfrak{so}(3)$ is its body angular velocity, identified with a vector $a_i(t)\in\mathbb{R}^3$ via the \textit{hat} map (a linear operator that maps a vector to its corresponding skew-symmetric matrix in the Lie algebra).
The quantity $\theta_{ki}\in[0,\pi)$ represents the geodesic distance between $R_k$ and
$R_i$ on $SO(3)$, defined by
\[
\theta_{ki} = \|\log(R_k^\top R_i)\|,
\]
and $n_{ki}\in\mathbb{S}^2$ is the associated unit rotation axis.
The communication weight $\phi_{ik}=\phi(\theta_{ki})$ is a nonnegative function of the
geodesic distance, typically chosen to vanish at the cut locus to ensure well-posedness. The resulting dynamics are given by
\begin{align*}
\dot{R}_i &= R_i A_i,\\
\dot{a}_i &= \frac{\kappa}{N}\sum_{k=1}^{N}\phi_{ik} \left[ \left(1-\cos\frac{\theta_{ki}}{2}\right)(n_{ki}\!\cdot\! a_k)n_{ki} + \sin\frac{\theta_{ki}}{2}\,(a_k \times n_{ki}) + \cos\frac{\theta_{ki}}{2}\, a_k - a_i
\right].
\end{align*}
The communication weight used in the numerical simulations is given by $\phi(\theta_{ki}) =
\cos\!\left(\theta_{ki}/2\right)$ for $\theta_{ki} < \pi$, otherwise 0.

\subsection{Example 5. Protein Dataset on SE(3)}
We use the ProteinNet ``Tertiary structure'' field, which provides for each residue $i$ (amino acid in the protein chain) the Cartesian coordinates of the backbone atoms $(N_i,\ C_\alpha{}_i,\ C_i)$ (in picometers). Following AlphaFold’s construction of backbone frames (Supplementary Information, \S1.8.1, Algorithm 21) \citep{Jumper2021HighlyAP}, we associate to each residue a rigid transform $T_i=(R_i,t_i)\in SE(3)$ as follows. Let $x_N,x_{C_\alpha},x_C\in\mathbb{R}^3$ denote the three backbone atom positions of the residue. Define
\[
v_1 = x_C - x_{C_\alpha}, ~
v_2 = x_N - x_{C_\alpha},
\]
and perform a Gram–Schmidt orthonormalization in the $N\!-\!C_\alpha\!-\!C$ triad:
\[
e_1 = \frac{v_1}{\|v_1\|}, \quad 
\tilde e_2 = v_2 - (e_1^\top v_2)\,e_1, \quad 
e_2 = \frac{\tilde e_2}{\|\tilde e_2\|}, \quad 
e_3 = e_1 \times e_2.
\]
Set the rotation $R_i = [\,e_1~e_2~e_3\,] \in SO(3)$ (flip $e_3$ if needed so that $\det R_i=+1$ as in Alg.~21), and the translation $t_i = x_{C_\alpha}$. Packing into homogeneous coordinates yields the per-residue frame
\[
T_i \;=\; \begin{bmatrix} R_i & t_i \\ 0~0~0 & 1 \end{bmatrix} \in SE(3),
\qquad
x_{\mathrm{global}} \;=\; R_i\,x_{\mathrm{local}} + t_i .
\]
ProteinNet supplies a per-residue mask; when any backbone atom is missing we treat $T_i$ as undefined and exclude pairs that touch masked residues. For learning, we predict $T_{i+1}$ given $T_i$. 

\paragraph{The Special Euclidean Group $\mathrm{SE}(3)$.}
The special Euclidean group is the Lie group of rigid motions in $\mathbb R^3$,
\[
\mathrm{SE}(3)=\left\{\begin{bmatrix}R&t\\0&1\end{bmatrix} : R\in \mathrm{SO}(3),~ t\in\mathbb R^3\right\},
\]
with identity $I=\begin{bmatrix}I_3&0\\0&1\end{bmatrix}$. Its tangent space at the identity is isomorphic to the Lie algebra
\[
T_I \mathrm{SE}(3)=\mathfrak{se}(3)=\left\{\begin{bmatrix}M & v\\0&0\end{bmatrix}:M\in\mathfrak{so}(3),~v\in\mathbb R^3\right\}.
\]
A convenient basis is given by three rotational generators $\{E_i\}_{i=1}^3\subset\mathfrak{so}(3)$ (as above, embedded in the top-left block) and three translational generators $\{T_i\}_{i=1}^3$,
\[
\widehat E_i=\begin{bmatrix}E_i&0\\0&0\end{bmatrix},\qquad
T_1=\begin{bmatrix}0&0&0&1\\0&0&0&0\\0&0&0&0\\0&0&0&0\end{bmatrix},\ 
T_2=\begin{bmatrix}0&0&0&0\\0&0&0&1\\0&0&0&0\\0&0&0&0\end{bmatrix},\ 
T_3=\begin{bmatrix}0&0&0&0\\0&0&0&0\\0&0&0&1\\0&0&0&0\end{bmatrix}.
\]
Thus, if a network outputs coefficients $c\in\mathbb R^3$ (rotation) and $u\in\mathbb R^3$ (translation), we form
\[
\xi=\sum_{i=1}^3 c_i\,\widehat E_i+\sum_{i=1}^3 u_i\,T_i\ \in\ \mathfrak{se}(3).
\]

\paragraph{Exponential.}
A group update with step $\Delta t$ is obtained via the matrix exponential
\[
g(n+1)=\exp(\Delta t\,\xi)\,g(n),\qquad g\in\mathrm{SE}(3).
\]

\paragraph{Projection.}
Given a near-rigid transform $Y=\begin{bmatrix}A&b\\0&1\end{bmatrix}\in\mathbb R^{4\times 4}$, we project its rotational block to $\mathrm{SO}(3)$ by the proper polar/SVD map:
\[
A=U\Sigma V^\top,\qquad P_{\mathrm{SO}(3)}(A)=\det(UV^\top)\,UV^\top,
\]
and define the $\mathrm{SE}(3)$ projection by keeping translation unchanged,
\[
P_{\mathrm{SE}(3)}(Y)=
\begin{bmatrix}
P_{\mathrm{SO}(3)}(A) & b\\
0 & 1
\end{bmatrix}.
\]
As with $\mathrm{SO}(3)$, this acts as a reliable correction when $Y$ is sufficiently close to the manifold.

\section{Flow-matching Learned Projection: Implementation and Hyperparameters}
\label{app:flowmatch_repro}
This appendix records the algorithm, exact velocity-network architecture, synthetic data generation, optimization loop, sweep grid, and saved artifacts used to reproduce the learned projections.

\subsection{Algorithm}

\begin{algorithm}[H]
\caption{Approximate Projection onto $M$ via Flow Matching}
\label{alg:approx-proj}
\begin{algorithmic}[1]

\STATE \textbf{Input:} Distribution $\mu$ supported on $M \subset \mathbb{R}^d$
\STATE \textbf{Output:} Approximate projection map $P_M(x_0)$

\STATE Generate samples $X_1, \dots, X_N \sim \mu$
\STATE Sample velocities $v_n \sim \mathcal{N}(m, \Sigma)$
\STATE Define perturbed paths $Y^t_n = X_n + v_n t$, for $t \in [0, T]$

\STATE Train $v_\theta(x,t)$ by minimizing
\[
\min_\theta \frac{1}{N} \sum_{n=1}^N 
\int_0^T \| v_\theta(Y^t_n, t) - v_n \|^2 \, dt
\]

\STATE Approximate the projection $P_M(x_0)$ by integrating the backward flow:
\[
\frac{d}{ds} x(s) = -v_\theta(t, x(s)), 
\qquad x(0) = x_0
\]

\end{algorithmic}
\end{algorithm}

\subsection{Model architecture}

The velocity model $v_\theta:\R^{d}\times[0,T]\to\R^d$ is implemented by \texttt{FlowVelocityNet}. Inputs are formed by concatenation $[x;t]\in\R^{d+1}$ and mapped to $\R^d$ by an MLP with hidden width $256$ and $8$ hidden blocks:
\begin{itemize}
\item Input block: $\mathrm{Linear}(d{+}1\to256)\rightarrow\mathrm{LayerNorm}(256)\rightarrow\mathrm{GELU}\rightarrow\mathrm{Dropout}(0.1)$.
\item Hidden blocks (repeated $8$ times): $\mathrm{Linear}(256\to256)\rightarrow\mathrm{LayerNorm}(256)\rightarrow\mathrm{GELU}\rightarrow\mathrm{Dropout}(0.1)$.
\item Output layer: $\mathrm{Linear}(256\to d)$.
\end{itemize}
LayerNorm is applied after each linear layer and before GELU. The dimension $d$ is inferred from the dataset tensor after flattening any sequence axis (below).

\subsection{Synthetic flow-matching dataset generation}
From initial points $X=\{x_n\}_{n=1}^N\subset\R^d$, we create supervised pairs by velocity advection:
\begin{enumerate}
\item Sample i.i.d.\ $u_n\sim \mathcal{N}(0,I_d)$.
\item Normalize and scale:
\[
v_n \;=\; 0.5 \cdot \frac{u_n}{\|u_n\|_2 + 10^{-8}}.
\]
\item Choose the time horizon $T$ either as a fixed value or via \texttt{auto}:
\[
T \;=\; 2\alpha \cdot \mathrm{median}_{n}\|x_n\|_2,
\qquad \alpha\in\{0.25,0.5,1.0\},
\]
computed on the training split. Construct a uniform time grid of $30$ points on $[0,T]$.
\item Form $y_{n,k}=x_n+v_n t_k$ and flatten pairs to inputs $z_{n,k}=[y_{n,k};t_k]\in\R^{d+1}$ and targets $v_n\in\R^d$ (repeated across $k$).
\end{enumerate}
This yields $(30N)$ pairs. The validation synthetic set is constructed once from \texttt{X\_val} and held fixed. The training synthetic set is regenerated once per epoch from \texttt{X\_train}.

\subsection{Training objective and optimization}
We minimize mean-squared error between predicted and target velocities. Optimization uses AdamW with batch size $256$, global gradient-norm clipping at $1.0$, and seed $0$ (NumPy and PyTorch). We apply a ReduceLROnPlateau schedule on validation loss with factor $0.5$ and patience $100$ epochs, and select the best checkpoint by lowest validation loss. Training is configured for up to $2000$ epochs with early stopping patience $1000$ epochs (except \texttt{cs}: $200$ epochs, scheduler patience $10$, early stopping patience $10$). Learning rate and weight decay are swept as described below.

\subsection{Sweep grid and run organization}
For each dataset, we train the Cartesian product
\[
    \alpha\in\{0.25,0.5,1.0\},\quad
    \mathrm{lr}\in\{3\times10^{-4},10^{-3},3\times10^{-3}\},\quad
    \mathrm{wd}\in\{0,10^{-4}\}.
\]

\subsection{Projection operator implementation}
Given a trained $v_\theta$, the learned projection integrates
\[
\frac{d}{dt}x(t) = -v_\theta(x(t),t),\qquad t\in[0,T],
\]
from $t=T$ to $t=0$ and returns $x(0)$. A differentiable implementation uses explicit Euler with a user-specified number of steps. For evaluation-time projection, we use RK45 via SciPy \texttt{solve\_ivp} with tolerances \texttt{rtol}$=10^{-6}$ and \texttt{atol}$=10^{-8}$.

\subsection{Manifold distances / constraint violations}
\label{app:dist}
We evaluate constraint satisfaction using explicit, dataset-dependent residuals. Below, all norms are computed per-sample and then aggregated by mean (and, when recorded, max) over the evaluated batch.

\paragraph{Sphere dataset ($\Omega=S^2\subset\R^3$).}
For $x\in\R^3$,
\begin{equation}
d_{\mathrm{sphere}}(x,\Omega)
\;:=\;
\bigl|\|x\|_2 - 1\bigr|.
\label{eq:dist_sphere}
\end{equation}

\paragraph{Disk dataset ($\Omega=\{x\in\R^2:\|x\|_2\le 1\}$).}
For $x\in\R^2$,
\begin{equation}
d_{\mathrm{disk}}(x,\Omega)
\;:=\;
\max\{0,\ \|x\|_2 - 1\}.
\label{eq:dist_disk}
\end{equation}

\paragraph{$\mathrm{SO}(3)$ dataset (and cs dataset; $\Omega=\mathrm{SO}(3)$ represented in $\R^9$).}
Each prediction is a row-major vector $r\in\R^9$ reshaped to a matrix $R\in\R^{3\times 3}$. We record:
\begin{align}
d_{\mathrm{orth}}(R) &:= \|RR^\top - I_3\|_F, \label{eq:dist_so3_orth}\\
d_{\det}(R) &:= \bigl|\det(R) - 1\bigr|, \label{eq:dist_so3_det}\\
d_{\mathrm{SO(3)}}(R) &:= d_{\mathrm{orth}}(R) + d_{\det}(R). \label{eq:dist_so3_sum}
\end{align}
The ``distance to manifold'' curve for $\mathrm{SO}(3)$ and cs uses the mean of $d_{\mathrm{SO(3)}}$.

\paragraph{Protein dataset ($\Omega=\mathrm{SE}(3)$ represented in $\R^{16}$).}
Each prediction is a row-major vector $g\in\R^{16}$ reshaped to $G\in\R^{4\times 4}$, with $R:=G_{1:3,1:3}\in\R^{3\times 3}$ and last row $\ell^\top := G_{4,:}\in\R^4$. We record:
\begin{align}
d_{\mathrm{orth}}(G) &:= \|RR^\top - I_3\|_F, \label{eq:dist_se3_orth}\\
d_{\det}(G) &:= \bigl|\det(R) - 1\bigr|, \label{eq:dist_se3_det}\\
d_{\mathrm{row}}(G) &:= \|\ell - [0,0,0,1]\|_\infty, \label{eq:dist_se3_row}\\
d_{\mathrm{SE(3)}}(G) &:= d_{\mathrm{orth}}(G) + d_{\det}(G) + d_{\mathrm{row}}(G). \label{eq:dist_se3_sum}
\end{align}
The ``distance to manifold'' curve for protein uses the mean of $d_{\mathrm{SE(3)}}$.

\paragraph{Which distance is plotted.}
For sphere and disk we plot $\mathbb{E}[d_{\mathrm{sphere}}]$ and $\mathbb{E}[d_{\mathrm{disk}}]$, respectively. For $\mathrm{SO}(3)$, cs, and protein we plot the mean of the corresponding summed residuals, i.e., $\mathbb{E}[d_{\mathrm{SO(3)}}]$ for $\mathrm{SO}(3)$/cs and $\mathbb{E}[d_{\mathrm{SE(3)}}]$ for protein.

\section{Training protocol and model selection for manifold learning experiments}
\label{app:manifold_training}

\subsection{Model architecture (Feedforward, residual)}
We implement the feedforward baseline using \texttt{RegularFeedForward}, which maps $x\in\mathbb R^{d}$ (or $x\in\mathbb R^{B\times S\times d}$ pointwise over the leading axes) to an output of the same shape. The network is a stack of $L$ identical blocks with a residual update and a learnable per-layer step size.

\begin{itemize}
\item \textbf{FF block.} Each block $F:\mathbb R^{d}\to\mathbb R^{d}$ is
\[
\mathrm{Linear}(d\!\to\!h)\rightarrow\mathrm{ReLU}\rightarrow\mathrm{Dropout}(p)\rightarrow\mathrm{Linear}(h\!\to\!h),
\]
implemented as \texttt{FFBlock}. In our instantiations we take $h=d$ so that the residual update is dimensionally consistent. 

\item \textbf{Stacking and residual connection.} Let $x_0=x$. For layers $i=1,\dots,L$:
\[
y_i = F_i(x_{i-1}),\qquad
x_i = x_{i-1} + \Delta t_i\, y_i,
\]
where $\Delta t_i$ is a learnable scalar (initialized to a common value). The output is $x_L$.
\end{itemize}

\subsection{Model architecture (Transformer, residual)}
We implement the transformer baseline using \texttt{RegularTransformer}, a standard transformer encoder that operates on sequences $x\in\R^{B\times S\times d}$ with feature dimension $d$.
The model consists of $L$ encoder layers (with $n_{\mathrm{head}}$ heads and feedforward width $d_{\mathrm{hid}}$) and a final linear readout:
\begin{itemize}
\item Input formatting: $x$ is permuted to $\R^{S\times B\times d}$ before entering the encoder stack.
\item Encoder stack (repeated $L$ times): each layer is a PyTorch \texttt{TransformerEncoderLayer} with parameters
\[
\texttt{TransformerEncoderLayer}(d,\ n_{\mathrm{head}},\ d_{\mathrm{hid}},\ \texttt{dropout}),
\]
followed by a pointwise ReLU and a residual update with learnable per-layer scale $\Delta t_i$:
\[
x \leftarrow x_{\mathrm{in}}
+ \Delta t_i\, \operatorname{ReLU}
  \bigl(\operatorname{EncLayer}_i(x_{\mathrm{in}})\bigr),
\quad i=1,\dots,L.
\]
where $\{\Delta t_i\}_{i=1}^L$ are learned parameters initialized to a common value.
\item Output layer: after permuting back to $\R^{B\times S\times d}$, a final linear map $\mathrm{Linear}(d\to d)$ produces the output.
\end{itemize}

\subsection{Projected vs.\ Exponential models (IAA vs.\ FAA)}
We enforce manifold structure in two ways: (i) \emph{projection} in the ambient representation, and (ii) \emph{exponential} updates via a user-supplied $\exp$-map hook. Both mechanisms are implemented in \texttt{Transformer} and \texttt{FeedForward} variants and expose a switch between applying the geometric map \emph{internally} (IAA) or \emph{only at the end} (FAA). 

\paragraph{Projected models.}
The \texttt{ProjectedTransformer} and \texttt{ProjectedFeedForward}
architectures store the state in ambient coordinates, so the input and output feature dimensions agree. A user-supplied projection hook, \texttt{proj\_func}, applies the map $\Pi(\cdot)$.

\begin{itemize}
\item \textbf{IAA, internal projection.}
After each layer or block update, apply $\Pi$ before passing the state to the
next layer:
\[
\begin{aligned}
x_{i+1}
&= \Pi\!\left(x_i + \Delta t_i\, f_i(x_i)\right),
\qquad i=0,\dots,L-2 .
\end{aligned}
\]
A final projection is also applied to the output. This setting corresponds to
\texttt{use\_internal\_\allowbreak projection=True}. It calls
\texttt{internal\_\allowbreak proj\_func} between layers or blocks, and calls
\texttt{end\_\allowbreak proj\_func} or
\texttt{final\_\allowbreak proj\_func} at the end.

\item \textbf{FAA, final projection only.}
Skip the internal projections and project only once at the end:
\[
\begin{aligned}
x_{i+1}
&= x_i + \Delta t_i\, f_i(x_i), \\
\hat{x}
&= \Pi(x_L).
\end{aligned}
\]
This setting corresponds to
\texttt{use\_internal\_\allowbreak projection=False}, while keeping the
mandatory final projection.
\end{itemize}

\paragraph{Exponential models.}
The \texttt{ExponentialTransformer} and \texttt{ExponentialFeedForward}
architectures implement an exponential-map update. The update uses a callable exponential map, \texttt{exp\_func}. For transformers, this callable is passed as either \texttt{internal\_exp\_func} or \texttt{end\_exp\_func}; for feed-forward models, it is passed directly as \texttt{exp\_func}. The transformer outputs an $m$-dimensional vector, where $m$ is the manifold dimension, through a linear head $\mathbb{R}^{\texttt{input\_dim}} \to \mathbb{R}^m$.

\begin{itemize}
\item \textbf{IAA (internal exp).} Apply the exp-map between layers/blocks:
\[
g_{i+1} \;=\; \mathrm{Exp}\!\big(\Delta t_i\,\xi_i\big)\,\cdot\, g_i,
\]
where $\xi_i$ is predicted by the network at layer $i$. In code this is \texttt{use\_internal\_exp=True} (transformer) / \texttt{use\_internal\_exponential=True} (FF), invoking \texttt{internal\_exp\_func} between layers and still applying \texttt{end\_exp\_func} at the end. 
\item \textbf{FAA (final exp only).} Skip the internal exp-map and apply a single exp-map at the end using the initial state as basepoint: $ \hat g = \mathrm{Exp}(\xi)\cdot g_0$, implemented by setting \texttt{use\_internal\_exp=False} / \texttt{use\_internal\_exponential=False}. In this mode, the end exp hook is applied with basepoint $\texttt{g}_0$ (the original input) rather than the internally-updated state. 
\end{itemize}

\subsection{Probabilistic model (anchor-based output)}

We implement a probabilistic predictor by discretizing the output space with $N$ \emph{anchors} (a.k.a.\ particles)
$\{Y^{(n)}\}_{n=1}^N \subset \R^{d}$ (or flattened $\R^{9}$ for $\mathrm{SO}(3)$ / $\R^{16}$ for $\mathrm{SE}(3)$). Anchors are chosen as a subset of the training targets $Y_{\mathrm{train}}$ (default in our experiments) or by synthetic sampling on the manifold (sphere / $\mathrm{SO}(3)$ / $\mathrm{SE}(3)$, with bounded translations for $\mathrm{SE}(3)$). 

\paragraph{Voronoi labels.}
Given a training pair $(x_t,y_t)$, we assign a discrete label by nearest-anchor (Voronoi partitioning)
\[
\ell_t \;=\; \arg\min_{n\in[N]} \|y_t - Y^{(n)}\|_2,
\]
implemented via a batched distance matrix and $\arg\min$ over anchors. 

\paragraph{Network output.}
The model backbone is the same as the regular feedforward / transformer, but its final linear layer is replaced to output
logits in $\R^N$ (one score per anchor). 

\paragraph{Training loss.}
Let $z_\theta(x_t)\in\R^N$ be logits and $p_\theta(x_t)=\mathrm{softmax}(z_\theta(x_t))\in\Delta^{N-1}$.
We train with squared error on the simplex against the one-hot label:
\[
\mathcal{L}(\theta)
\;=\;
\frac{1}{B}\sum_{t=1}^B \bigl\|p_\theta(x_t)-e_{\ell_t}\bigr\|_2^2,
\]
implemented as sum of squared errors over anchors (then averaged over the batch). 

\paragraph{Inference.}
Given $p_\theta(x)$, we produce a continuous prediction either by expectation (weighted average)
\[
\hat y \;=\; \sum_{n=1}^N p_{\theta,n}(x)\, Y^{(n)}
\quad\text{(default)},
\]
or by $\arg\max$ (snap to the most likely anchor), $\hat y = Y^{(\arg\max_n p_{\theta,n}(x))}$. 

\paragraph{Implementation notes.}
For the probabilistic mode, the training loader yields $(x,\ell)$ while validation compares $\hat y$ to the true target via MSE. 
For sequential data (e.g.\ \texttt{cs}), labels are formed after flattening the sequence axis, and transformer models use the last-timestep logits during training.

\subsection{Shared hyperparameter sweep}
For every dataset--model-family combination, we run a small hyperparameter sweep:
\begin{itemize}
\item \textbf{Depth:} $D\in\{4,6,8\}$.
\item \textbf{Weight decay:} $\lambda\in\{0,\ 1e-4\}$ 
\end{itemize}
Each sweep run is trained independently.

\subsection{Optimization and stopping criteria}
All models are trained for up to $10{,}000$ epochs using AdamW with initial learning rate $\eta_0=1e-3$ and batch size $B=500$.
We reduce the learning rate using a validation-driven plateau schedule (ReduceLROnPlateau) with factor $\gamma=0.5$ and patience $p=1000$ epochs. 
We checkpoint the model with the best validation loss (lowest value) across the full training run.

\subsection{Validation-based model selection}
For each dataset and model family, we select the final reported model as follows:
\begin{enumerate}
\item For each hyperparameter configuration (depth, weight decay, and any model-specific knobs), train to completion and record the best validation loss and corresponding checkpoint.
\item Among configurations within a model family, pick the configuration with the lowest best-validation loss.
\item Evaluate that selected checkpoint on the held-out test set and record test metrics.
\end{enumerate}
This selection is performed separately for each dataset and each model family, ensuring that test results are obtained from models chosen without test-set feedback.

\subsection{Test metrics}
We report two metrics per dataset--model pairing.
\paragraph{Prediction error (MSE).}
Let $\hat x_T=f_\theta(x_0)$ denote the model prediction for the target $x_T$. We report
\[
\mathrm{MSE} \;=\; \frac{1}{N_{\mathrm{test}}}\sum_{i=1}^{N_{\mathrm{test}}}\|\hat x_T^{(i)}-x_T^{(i)}\|^2.
\]

\paragraph{Distance to manifold / constraint violation.}
For each dataset, we compute a constraint-violation diagnostic $d(\hat x_T,\Omega)$ appropriate to the geometry. We aggregate by reporting the mean over the test set:
\[
\mathrm{MeanDist} \;=\; \frac{1}{N_{\mathrm{test}}}\sum_{i=1}^{N_{\mathrm{test}}} d\!\left(\hat x_T^{(i)},M\right).
\]
The metric used can be found in Appendix~\ref{app:dist}.

\section{Path planning with Constrained Networks}

In this section, we consider a supplementary experiment for using 
We next illustrate how the proposed constrained architectures can be used
for path planning on a feasible configuration space. We focus on final
augmented architectures with projection as a proof of concept.

Let \(M\subset \mathbb R^d\) be a manifold with boundary, representing
the set of feasible configurations. Given a target configuration
\(y\in M\), our goal is to learn a feedback map
\[
    \Psi_\theta : M\times M \to M
\]
such that the closed-loop iteration
\[
    x_{n+1} = \Psi_\theta(x_n,y), \qquad x_0\in M,
\]
generates a feasible sequence of configurations that approaches the
target:
\[
    x_n \to y \qquad \text{as } n\to\infty .
\]

We parameterize the feedback map using an FAA with final projection.
Specifically, an unconstrained neural network
\(g_\theta : M\times M \to \mathbb R^d\) proposes the next configuration,
and feasibility is enforced by projecting the output back onto \(M\):
\[
    \Psi_\theta(x,y) := P_M\bigl(g_\theta(x,y)\bigr).
\]
This form has a
natural interpretation as a feedback controller: the network proposes a
local update in position from the current state toward the target, and the projection
ensures that the next state remains feasible.

We train the planner by sampling current-target pairs
\((x,y)\sim \rho\) from \(M\times M\) and minimizing the one-step loss
\[
\mathcal L(\theta)
=
\mathbb E_{(x,y)\sim \rho}
\left[
    d\bigl(\Psi_\theta(x,y),y\bigr)^2
    + \lambda \|\Psi_\theta(x,y)-x\|^2
\right].
\]
The first term encourages the learned update to move the current state toward
the target, while the second term penalizes large jumps and therefore
promotes smoother paths. 

At test time, for a fixed target \(y\in M\) and an initial condition
\(x_0\in M\), we generate a path by repeatedly applying
\[
    x_{n+1}=\Psi_\theta(x_n,y).
\]
Because of the final projection, every waypoint satisfies
\[
    x_n\in M \qquad \text{for all } n\ge 0.
\]
Thus the rollout gives a feasible projected path from \(x_0\) toward the
target \(y\).


\paragraph{Path Planning in a Rectangular Domain with Obstacles.}
Figure~\ref{fig:path-planning} shows closed-loop rollouts of the learned
planner on a rectangular domain with three circular obstacles. Circles
denote initial configurations \(x_0\), and crosses denote target
configurations \(y\). Each curve is generated by
\[
    x_{n+1}=\Psi_\theta(x_n,y).
\]
The feasible set is the rectangle with the obstacle interiors removed.
The rollouts illustrate that the final augmentation step keeps the generated
waypoints feasible while the learned feedback map moves states toward
their targets. Although some plotted line segments may appear to pass through obstacles,
these lines are only visual interpolations between discrete waypoints. They
do not represent the actual continuous-time path. The generated waypoints
themselves are repaired back to the feasible set.

\paragraph{Path Planning on the Sphere.}
In Figure \ref{fig:path-planning}, we also evaluate the projected feedback planner on \(S^2\) with one
spherical-cap obstacle
\[
    O=\{x\in S^2:\arccos(x^\top c)<r\}.
\]
The feasible set is \(M=S^2\setminus O\). Initial configurations are
sampled from the west side of the sphere and target configurations from
the east side, so that the desired motion around the obstacle is visually
clear. Each rollout is generated by
\[
    x_{n+1}=\Psi_\theta(x_n,y),
\]
where the final repair step normalizes the state to \(S^2\) and pushes
points inside the obstacle back to the obstacle boundary. The curves are
rendered using geodesic interpolation only for visualization; feasibility
is evaluated on the actual discrete waypoints.

\end{document}